\theoremstyle{plain}
\newtheorem{theorem}{Theorem}[section]
\newtheorem{lemma}[theorem]{Lemma}
\newtheorem{proposition}[theorem]{Proposition}
\theoremstyle{definition}
\newtheorem{definition}[theorem]{Definition}
\theoremstyle{remark}
\newtheorem{remark}{Remark}
\begin{document}


\title{Learning Graph Laplacian with MCP}

\author{
\name{
Yangjing Zhang\textsuperscript{a}
\thanks{Email: yangjing.zhang@amss.ac.cn}
Kim-Chuan Toh\textsuperscript{b}
\thanks{Email: mattohkc@nus.edu.sg}
and Defeng Sun\textsuperscript{c}
\thanks{Email: defeng.sun@polyu.edu.hk}
}
\affil{\textsuperscript{a}Institute of Applied Mathematics, Academy of Mathematics and Systems Science, Chinese Academy of Sciences, Beijing, People's Republic of China; \textsuperscript{b}Department of Mathematics, and Institute of Operations Research and Analytics, National University of Singapore, Singapore;
\textsuperscript{c}Department of Applied Mathematics, The Hong Kong Polytechnic University, Hong Kong
}}

\maketitle

\begin{abstract}
We consider the problem of learning a graph under the Laplacian constraint with a non-convex penalty: minimax concave penalty (MCP). For solving the MCP penalized graphical model, we design an inexact proximal difference-of-convex algorithm (DCA) and prove its convergence to critical points. We note that each subproblem of the proximal DCA enjoys the nice property that the objective function in its dual problem is continuously differentiable with a semismooth gradient. Therefore, we apply  an efficient semismooth Newton method to subproblems of the proximal DCA. Numerical experiments on various synthetic and real data sets demonstrate the effectiveness of the non-convex penalty MCP in promoting sparsity. Compared with the existing state-of-the-art method, our method is demonstrated to be  more efficient and reliable  for learning graph Laplacian with MCP.
\end{abstract}

\begin{keywords}
Graph learning; graph Laplacian estimation; precision matrix estimation; non-convex penalty; difference-of-convex
\end{keywords}


\section{Introduction}
In modern multivariate data analysis, one of the most important problems is the estimation of the precision matrix (or the inverse covariance matrix) of a multivariate distribution via an undirected graphical model. A Gaussian graphical model for a Gaussian random vector $\Delta \sim \mathcal{N}_n(\mu,\Sigma)$ is represented by a graph $\mathcal{G} = (\mathcal{V},\mathcal{E})$, where $\mathcal{V}$ is a collection of $n$ vertices corresponding to the $n$ random variables (features), and an edge $(i,j)$ is absent, i.e., $(i,j) \notin \mathcal{E}$,  if and only if the $i$-th and $j$-th random variables are conditionally independent of each other, given all other variables. The conditional independence is further equivalent to having the $(i,j)$-th entry of the precision matrix $(\Sigma^{-1})_{ij}$ being zero \cite{lauritzen1996graphical}.
Thus, finding the graph structure of a Gaussian graphical model is equivalent to the identification of zeros in the corresponding precision matrix.

Let $\mathbb{S}^n_+$ ($\mathbb{S}^n_{++}$)  denote the cone of positive semidefinite (definite) matrices in the space of $n\times n$ real symmetric matrices $\mathbb{S}^n$.
Given a Gaussian random vector $\Delta \sim \mathcal{N}_n(\mu,\Sigma)$ and its sample covariance matrix $S \in \mathbb{S}^n_+$, a notable way of learning a precision matrix from the data matrix $S$ is via the following $\ell_1$-norm penalized maximum likelihood approach \cite{yuan2007model,banerjee2008model,friedman2008sparse,rothman2008sparse}
\begin{equation}\label{model-glasso}
\begin{array}{l}
\min\limits_{\Theta\in \mathbb{S}^n_{++}}~-\log\det \,\Theta+\langle S,\Theta \rangle +
{\lambda\|\Theta\|_{1,{\rm off}}},
\end{array}
\end{equation}
where {$\|\Theta\|_{1,{\rm off}} = \sum_{i\neq j} |\Theta_{ij}|$} and $\lambda$ is a non-negative penalty parameter.
The solutions to \eqref{model-glasso} are only constrained to be positive definite, and both positive and negative edge weights are allowed in the estimated graph. However, one may further require all edge weights to be non-negative. A negative edge weight implies a negative partial correlation between the two connected random variables, which might be difficult to interpret in some applications. For certain types of data, one feature is likely to be predicted by non-negative linear combinations of other features. Under such application settings, the extra non-negative constraints on the edge weights can provide a more accurate estimation of the graph than \eqref{model-glasso} and thus one prefers to estimate a graph Laplacian matrix from the data. More broadly, graph Laplacian matrices are desirable for a large majority of studies and applications, for example, spectral graph theory \cite{chung1997spectral}, clustering and partition problems \cite{simon1991partitioning,shi1997normalized,ng2002spectral},  dimensionality reduction and data representation \cite{belkin2003laplacian}, and graph signal processing \cite{shuman2013emerging}. Therefore, it is essential to learn graph Laplacian matrices from data.
{In fact, many researchers have recently considered Gaussian graphical models that is multivariate totally positive of order 2 (MTP$_2$), namely, all partial correlations are required to be non-negative. A Gaussian graphical model is said to be MTP$_2$ if the $(i,j)$-th entry of the precision matrix $(\Sigma^{-1})_{ij}$ is non-positive for all $i\neq j$. We refer the readers to \cite{fallat2017total,lauritzen2019maximum,wang2020learning,slawski2015estimation,tugnait2021high,tugnait2021sparse,deng2020fast,egilmez2017graph} for properties of MTP$_2$ Gaussian graphical models and efficient algorithms for estimating the MTP$_2$ Gaussian graphical models.}

We start by giving the definition of a graph Laplacian matrix formally. For an undirected weighted graph $\mathcal{G} = (\mathcal{V},\mathcal{E})$ with $\mathcal{V}$ being the set of vertices and $\mathcal{E}$ the set of edges, the weight matrix of $\mathcal{G}$ is defined as $W\in\mathbb{S}^n$ where $W_{ij} = w_{(ij)}\in\mathbb{R}_+$ is the weight of the edge $(i,j)\in\mathcal{E}$ and $W_{ij} = 0$ if $(i,j)\notin\mathcal{E}$. Therefore, the weight matrix $W$ consists of non-negative off-diagonal entries and zero diagonal entries. The graph Laplacian matrix, also known as the combinatorial graph Laplacian matrix, is defined as
$$ L = D-W\in\mathbb{S}^n,$$
where $D$ is the diagonal  matrix such that $D_{ii} = \sum_{j=1}^{n} W_{ij}$. The connectivity  (adjacency)  matrix $A\in\mathbb{S}^n$ is defined as the
sparsity pattern of $W$, i.e., $A_{ij} = 1$ if $W_{ij} > 0$, and $A_{ij} = 0$ if $W_{ij} = 0$. The set of graph Laplacian matrices then consists of matrices with non-positive off-diagonal entries and zero row-sum
\begin{equation}\label{set-lap}
\mathcal{L}=\{\Theta\in\mathbb{S}^n\,|\,\Theta{\bf 1}={\bf 0},\Theta_{ij}\leq 0 \mbox{ for } i\neq j\}.
\end{equation}
Here, ${\bf 1}$ (${\bf 0}$) denotes the vector of all ones (zeros). If the graph connectivity matrix $A$ is known a priori, then the constrained set of  graph Laplacian matrices is
\begin{equation}\label{set-lap2}
\mathcal{L}(A) = \left\{\Theta\in\mathbb{S}^n\,|\,\Theta{\bf 1}={\bf 0},
\begin{array}{l}
\Theta_{ij}\leq 0 \mbox{ if } A_{ij} = 1\\
\Theta_{ij}= 0 \mbox{ if } A_{ij} = 0
\end{array}
\mbox{ for } i\neq j
\right\}.
\end{equation}
As we know, a Laplacian matrix is diagonally dominant and positive semidefinite, and it has a zero eigenvalue with the associated eigenvector ${\bf 1}$. If the graph is connected, then the null space of the  Laplacian matrix is one-dimensional and spanned by the vector ${\bf 1}$ \cite{chung1997spectral}.

Starting from the earlier work of imposing the Laplacian structure \eqref{set-lap} on the estimation of precision matrix in \cite{lake2010discovering},
this line of research has seen a recent surge of interest in
\cite{hu2013graph,hassan2016topology,kalofolias2016learn,dong2016learning,egilmez2017graph,zhao2019optimization,kumar2020unified}.
To handle the singularity issue of a Laplacian matrix in the calculation of the log-determinant term,  Lake and Tenenbaum \cite{lake2010discovering} considered a regularized graph Laplacian matrix $\Theta + \alpha I$ (hence, full rank) by adding  a positive scalar $\alpha$ to the diagonal entries of the graph Laplacian matrix $\Theta$. More recently, by considering connected graphs,  Egilmez et al. \cite{egilmez2017graph} and Hassan-Moghaddam et al. \cite{hassan2016topology} used a modified version of \eqref{model-glasso} by adding the constant matrix $J = (1/n){\bf 1}{\bf 1}^T$ to the graph Laplacian matrix to compensate for the null space spanned by the vector ${\bf 1}$. Moreover, Egilmez et al. \cite{egilmez2017graph} incorporated the connectivity matrix $A$ into the model to exploit any prior structural information about the graph. Given the connectivity matrix $A$ and a data matrix $S$ (typically a sample covariance matrix),  Egilmez et al. \cite{egilmez2017graph} proposed the following $\ell_1$-norm penalized combinatorial graph Laplacian \eqref{cgl-l1} estimation model
\begin{equation}\label{cgl-l1}
\begin{array}{cl}
\min & \{-\log\det\, (\Theta + J) + \langle S,\Theta \rangle + {\lambda\|\Theta\|_{1,{\rm off}}}\,|\,\Theta\in\mathcal{L}(A) \}.
\end{array}\tag{CGL-L1}
\end{equation}
When the prior knowledge of  the structural information $A$ is not available, especially for real data sets, one can take the fully connected matrix as the connectivity matrix, i.e., $A={\bf 1}{\bf 1}^T - I$, with $I$ being an identity matrix. In this case, the model \eqref{cgl-l1} involves estimating both the graph structure and graph edge weights.
The model \eqref{cgl-l1} is a natural extension of the classical  model \eqref{model-glasso} due to the equality
$\log\det\, (\Theta + J) = \log {\rm pdet}\, \Theta$
for any Laplacian matrix $\Theta$ of a connected graph. Here ${\rm pdet}\,(\cdot)$ denotes the pseudo-determinant of a square matrix, i.e.,  the product of all non-zero eigenvalues of the matrix.

The model \eqref{cgl-l1} naturally extends  the classical model \eqref{model-glasso} to incorporate the graph Laplacian constraint. {However,  the model \eqref{cgl-l1} has been proved to have the drawback that the $\ell_1$ penalty may lose its power in promoting sparsity in the estimated graph \cite{ying2020nonconvex}.
In fact, an intuitive explanation of this phenomena is by the zero row-sum constraint} of a valid Laplacian matrix $\Theta\in\mathcal{L}(A)$, which satisfies
$$ {\lambda\|\Theta\|_{1,{\rm off}}} = -\lambda \sum_{i\neq j}\Theta_{ij} = \lambda \sum_i \Theta_{ii} = \langle \lambda I,\Theta\rangle. $$
Thus the $\ell_1$ penalty term in \eqref{cgl-l1} simply penalizes the diagonal entries of $\Theta$ but not the individual  entries $\Theta_{ij}$. Hence adjusting $\lambda$ may not affect the sparsity level of the solution of \eqref{cgl-l1}.

Motivated by the observation above, the work \cite{ying2020nonconvex} proposed to apply a non-convex penalty function to promote sparsity. Non-convex penalties can generally reduce estimation bias, and they have been applied in sparse precision matrix estimation in \cite{lam2009sparsistency,shen2012likelihood}.
In this paper, we aim to propose an efficient algorithm to learn a graph Laplacian matrix as the precision matrix from the constrained maximum likelihood estimation with the minimax concave penalty (MCP) \cite{zhang2010nearly}
\begin{equation}\label{cgl-mcp}
\begin{array}{cl}
\min & \{-\log\det\, (\Theta + J) + \langle S,\Theta \rangle + P(\Theta) \,|\,\Theta\in\mathcal{L}(A) \},
\end{array}\tag{CGL-MCP}
\end{equation}
where the MCP function $P$ is given as follows
(we omit its dependence  on the parameters $\gamma$ and $\lambda$ for brevity):
\begin{equation}\label{fct-MCP}
\begin{array}{cl}
P(\Theta) &= \displaystyle \sum_{i\neq  j} p_{\gamma}(\Theta_{ij};\lambda),\,\gamma > 1, \mbox{ for } \Theta\in\mathbb{S}^n,\\
p_{\gamma}(x;\lambda)&=
\begin{cases}
  \lambda |x| - \frac{x^2}{2\gamma}, & \mbox{if } |x|\leq \gamma\lambda, \\
  \frac{1}{2}\gamma \lambda^2, & \mbox{if } |x| >\gamma\lambda,
\end{cases}
\mbox{ for } x\in\mathbb{R},\,\lambda>0.
\end{array}
\end{equation}
It is known that the MCP can be expressed as the difference of two convex (d.c.) functions \cite[Section~6.2]{ahn2017difference}. Therefore, we can design a d.c. algorithm (DCA) for solving \eqref{cgl-mcp} by using the d.c. property of the MCP function.
DCA is an important tool for solving d.c. programs, and numerous research has been conducted on this topic; see \cite{tao1997convex,vo2015learning,pang2017computing,le2018dc}, to name only a few.
Specifically in this paper, we aim to develop an inexact  proximal DCA for solving \eqref{cgl-mcp}.
We solve approximately a sequence of convex minimization subproblems by finding an affine minorization of the second d.c. component. Allowing inexactness for solving the subproblem is crucial since computing the exact solution of the subproblem is generally impossible. The novel step of our algorithm is to introduce a proximal term for each subproblem. With the proximal terms, we can obtain a convex and continuously differentiable  dual problem of the subproblem. Due to the nice property of the dual formulation, we can apply a highly  efficient semismooth Newton (SSN) method  \cite{kummer1988newton,qi1993nonsmooth,sun2002semismooth} for solving the dual of the subproblem.
For completeness, we also prove the convergence of the inexact proximal DCA to a critical point.

{\it Summary of contributions:}
1) We propose an inexact proximal DCA for  solving the  model \eqref{cgl-mcp}, and prove its convergence to a critical point.
2) As the subroutine for solving the proximal DCA subproblems, we design a semismooth Newton method for solving the dual form of the subproblems.
3) The effectiveness of the proposed model \eqref{cgl-mcp}  and the efficiency of the inexact proximal DCA for solving \eqref{cgl-mcp} are comprehensively demonstrated via various experiments on both synthetic and real data sets.
4) More generally, both the model and algorithm can be extended directly to other non-convex penalties, such as the smoothly clipped absolute deviation (SCAD) function \cite{fan2001variable}.

{\it Outline:} The remainder of the paper is organized as follows. The  inexact proximal DCA for solving \eqref{cgl-mcp}, together with its convergence property, is given in Section~\ref{sec-mm}. Section~\ref{sec-ssn} presents the semismooth Newton  method for solving  the subproblems of the proximal DCA.
Numerical experiments are presented in Section~\ref{sec-exp}. Finally, Section~\ref{sec-con} concludes the paper.

\section{Inexact Proximal DCA for \eqref{cgl-mcp}}\label{sec-mm}
We propose an inexact proximal DCA for solving \eqref{cgl-mcp} in this section and prove its convergence to a critical point.

\subsection{Reformulation of \eqref{cgl-mcp}}
In order to take full advantage of the sparsity of the graph via the connectivity matrix, we reformulate the model  \eqref{cgl-mcp} with the  vector of edge weights as the decision variable.  This formulation eliminates the extra computation incurred by non-existence edges ($A_{ij}=0$).

Given an undirected  graph $\mathcal{G}$ and its connectivity matrix $A$, we know that the edge set can be characterized by the connectivity matrix: $\mathcal{E}=\{(i,j)\,|\,A_{ij}=1,\,i<j\}$.
Let $\mathbb{R}^{|\mathcal{E}|}$ be the vector space such that for any $w\in \mathbb{R}^{|\mathcal{E}|}$, the components of $w$ are indexed by the elements of $\mathcal{E}$, i.e., $[w_{(ij)}]_{(i,j)\in\mathcal{E}}$.
We also let $B\in\mathbb{R}^{n\times |\mathcal{E}|}$ be the node-arc incidence matrix such that the $(ij)$-th column is given by $B_{(ij)} = e_i - e_j$, where $e_i,\,i=1,\dots,n$ are the standard unit vectors in $\mathbb{R}^n$. {It is well known that the Laplacian matrix of the graph $\mathcal{G}$ can also be given by $B{\rm Diag}(w)B^T$ with $w$ being the vector of edge weights.}
Thus, we define a linear map $\mathcal{A}^*:\,\mathbb{R}^{|\mathcal{E}|} \to \mathbb{S}^n$:
$$
\mathcal{A}^*w = B{\rm Diag}(w)B^T,\,\forall\,w\in\mathbb{R}^{|\mathcal{E}|}.
$$
We can see that $\mathcal{A}^*w\in \mathcal{L}(A)$ if $w\in\mathbb{R}^{|\mathcal{E}|}$ is a non-negative weight vector by noting that
\begin{equation*}
(\mathcal{A}^*w)_{ij}=\left\{
\begin{array}{ll}
 \sum\limits_{(i,k)\in\mathcal{E}} w_{(ik)} + \sum\limits_{ (k,i)\in\mathcal{E}}w_{(ki)}, & \mbox{ if } i=j, \\
 -w_{(ij)}, & \mbox{ if } (i,j) \in \mathcal{E} \\
 -w_{(ji)}, & \mbox{ if } (j,i) \in \mathcal{E}, \\
 0,         & \mbox{ otherwise}.
\end{array}\right.
\end{equation*}
It is easy to obtain from the definition that the adjoint map $\mathcal{A}:\,\mathbb{S}^n  \rightarrow \mathbb{R}^{|\mathcal{E}|}$ is given by $\mathcal{A}X = {\rm diag}(B^TXB),\,\,\forall\, X \in\mathbb{S}^n$. With the relationship between the Laplacian matrix and the weight vector, the model \eqref{cgl-mcp} can be reformulated as follows:
\begin{equation}\label{cgl-mcp2}
  \begin{array}{cl}
  \min\limits_{w} & -\log\det\, ( \mathcal{A}^*w+ J) + \langle S, \mathcal{A}^*w \rangle + P( \mathcal{A}^*w)\\[6pt]
  {\rm s.t.} & w \in \mathbb{R}^{|\mathcal{E}|}_+.
  \end{array}
\end{equation}
We note that the constraint $\Theta\in\mathcal{L}(A)$ in the original  model \eqref{cgl-mcp} is reformulated as a linear constraint and a non-negative constraint, namely, $\Theta = \mathcal{A}^*w$ and $w \in \mathbb{R}^{|\mathcal{E}|}_+$.

\subsection{Inexact Proximal DCA for Solving \eqref{cgl-mcp2}}
The function $p_{\gamma}(\cdot;\lambda)$ defined in \eqref{fct-MCP} can be expressed as the difference of two convex  functions \cite[Section~6.2]{ahn2017difference}  as follows:
$$ p_{\gamma}(x;\lambda) = \lambda |x| - h_{\gamma}(x;\lambda), $$
where $\gamma$ and $\lambda$ are given positive parameters, and
\begin{equation*}\label{fct-h}
h_{\gamma}(x;\lambda)=
\begin{cases}
 \frac{x^2}{2\gamma}, & \mbox{if } |x|\leq \gamma\lambda, \\
 \lambda|x| - \frac{1}{2}\gamma \lambda^2, & \mbox{if } |x| >\gamma\lambda,
\end{cases}
\mbox{ for } x\in\mathbb{R},\,\lambda>0.
\end{equation*}
Its gradient is given by
$ \nabla_x h_{\gamma}(x;\lambda) = \min\big(\frac{|x|}{\gamma},\lambda\big){\rm sign}(x)$. ${\rm sign}(\cdot)$ is defined such that ${\rm sign}(t)=1$ if $t>0$, ${\rm sign}(t) = 0$ if $t=0$, and ${\rm sign}(t)=-1$ if $t<0$.
Therefore, the MCP function $P$ in \eqref{fct-MCP} can also be written as the difference of two convex functions
\begin{equation}\label{dc-P}
\begin{array}{cl}
P(\Theta) &= \lambda \|\Theta\|_{ 1,{\rm off}} - h(\Theta),\\[3mm]
h(\Theta) &= \displaystyle\sum_{i\neq j} h_{\gamma}(\Theta_{ij};\lambda), \mbox{ for } \Theta\in\mathbb{S}^n.
\end{array}
\end{equation}
Note that  $h:\mathbb{S}^n\to\mathbb{R}$ is a convex and continuously differentiable function with its gradient given by
\begin{equation}\label{grad-h}
\left[\nabla h(\Theta) \right]_{ij} =
\begin{cases}
\min\Big(\frac{|\Theta_{ij}|}{\gamma},\lambda\Big){\rm sign}(\Theta_{ij}), & \mbox{if } i\neq j, \\
0, & \mbox{if } i=j.
\end{cases}
\end{equation}
With the above preparation, we can now propose an inexact proximal DCA (Algorithm~\ref{alg-mm}) for solving \eqref{cgl-mcp2}. We adopt an approximate solution of the \eqref{cgl-l1} as the initial point. Note that the \eqref{cgl-l1} can be solved by the alternating direction method of multipliers; see Appendix~\ref{app:B} for its implementation. It is important to note that Step~0 is not a necessary component of Algorithm~\ref{alg-mm}.
Instead, one can simply initialize with an arbitrary value $w^0$ and proceed with Steps~1 and 2. However, we include Step~0 in order to establish connections between the non-convex problem \eqref{cgl-mcp} and the convex problem \eqref{cgl-l1}.

We motivate the choice of the initial point in Step 0 with the following observation. If we choose a simple and natural initialization of  $w^0=0\in \mathbb{R}^{|\mathcal{E}|}$ and perform Step 1 at $k=0$, the resulting subproblem \eqref{subprob} takes an approximate form of \eqref{cgl-l1} with proximal terms:
\begin{equation*}
\underset{w}{\min} ~~ \Big\{-\log\det\, ( \mathcal{A}^*w + J) + \langle S + \lambda I, \mathcal{A}^*w \rangle
+ \langle \delta^0,w\rangle + \frac{\sigma_0}{2}\|w \|^2 + \frac{\sigma_0}{2}\|\mathcal{A}^*w \|^2  \,\,\Big|\,\,w \in \mathbb{R}^{|\mathcal{E}|}_+ \Big\},
\end{equation*}
where $\delta^0$ is the error vector and $\sigma_0>0$ is a given constant. This problem is equivalent to  \eqref{cgl-l1} with proximal terms. Therefore, we approximately solve \eqref{cgl-l1} to obtain an initial point.
Lastly, for the inexactness condition \eqref{stop-cond}, we make the convention that $\frac{\sigma_k}{4}\|w^{k+1} - w^k\| +
\frac{\sigma_k \| \mathcal{A}^* w^{k+1}- \mathcal{A}^*w^k\|^2 }{2\|w^{k+1} - w^k\|}=0$ when $w^{k+1} = w^k$. It is natural because this term goes to zero as $\|w^{k+1} - w^k\|\to 0$. If $w^{k+1} = w^k$ and $\delta^k=0$, it follows from the optimality condition of \eqref{subprob} that $w^k$ is a critical point of \eqref{cgl-mcp3}.

\begin{algorithm}[h]
\caption{Inexact proximal DCA  for solving \eqref{cgl-mcp}}\label{alg-mm}

Given $\lambda>0$, $\gamma > 1$, and $\sigma_0 > 0$.

\begin{description}
\item[Step 0.]
Solve \eqref{cgl-l1} approximately
\begin{equation*}
\begin{array}{l}
w^0 \approx \arg\underset{w}{\min} ~~\Big\{-\log\det\, ( \mathcal{A}^*w + J) + \langle S + \lambda I, \mathcal{A}^*w \rangle \,\,\Big|\,\,w \in \mathbb{R}^{|\mathcal{E}|}_+ \Big\}.
\end{array}
\end{equation*}
Go to {\bf Step 1}.

\vspace{2mm}
\item[Step 1.]
Solve the following problem with $G^k:=S+\lambda I - \nabla h(\mathcal{A}^*w^k)$
\begin{equation}\label{subprob}
\begin{array}{rl}
w^{k+1}= \arg\underset{w}{\min} & \Big\{
-\log\det\, (\mathcal{A}^*w + J) + \langle G^k,\mathcal{A}^*w \rangle + \langle \delta^k,w\rangle \\[6pt]
 & + \frac{\sigma_k}{2}\|w - w^k\|^2 + \frac{\sigma_k}{2}\|\mathcal{A}^*w - \mathcal{A}^* w^k\|^2 \,\,\Big|\,\, w \in \mathbb{R}^{|\mathcal{E}|}_+ \Big\},
\end{array}
\end{equation}
such that the error vector $\delta^k$ satisfies the stopping condition
\begin{equation}\label{stop-cond}
\|\delta^k\| \leq \frac{\sigma_k}{4}\|w^{k+1} - w^k\| +
\frac{\sigma_k \| \mathcal{A}^* w^{k+1}- \mathcal{A}^*w^k\|^2 }{2\|w^{k+1} - w^k\|}.
\end{equation}

\item[Step 2.] If a prescribed stopping criterion is satisfied, terminate; otherwise update $\sigma_{k+1}  \leftarrow \rho_k \sigma_k$ with $\rho_k\in{(0,1]}$ and return to {\bf Step 1} with $k \leftarrow k+1$.

\end{description}
\end{algorithm}

\subsection{Convergence Analysis of the Inexact Proximal DCA}
In this section, we prove that the limit point of the sequence generated by our inexact proximal DCA (Algorithm~\ref{alg-mm}) will converge to a critical point of the non-convex problem \eqref{cgl-mcp2}. We also establish the sequential convergence and convergence rate results under additional assumptions.
Although there exist convergence results of  d.c. algorithms, we delicately designed proximal terms and stopping conditions in our framework which may be different from existing works. Therefore we give a complete proof of convergence here.
We can rewrite our problem in the form
\begin{equation}\label{cgl-mcp3}
\inf  \{f(w):=g(w) - h(\mathcal{A}^*w)\},
\end{equation}
where $h$ is defined in \eqref{dc-P} and
\begin{equation*}
g(w) := -\log\det\, (\mathcal{A}^*w + J) + \langle S+\lambda I,\mathcal{A}^*w \rangle + \delta (w\,|\,\mathbb{R}^{|\mathcal{E}|}_+ ),\,w\in\mathbb{R}^{|\mathcal{E}|}.
\end{equation*}
Here $\delta(\cdot\,|\,C)$ denotes the indicator function of any convex set $C$.
The function $g$ is lower semi-continuous (l.s.c) and convex, and the function $h$ is continuously differentiable and convex.
A point $w$ is said to be a critical point of \eqref{cgl-mcp3} if
$$
\mathcal{A}\nabla h(\mathcal{A}^*w) \in \partial g(w),
$$
where  $\partial g(w)$ is the subdifferential of the convex function $g$ at $w$.
At each iteration $k$, we define the following function which majorizes $f$
\begin{equation}\label{major-f}
f^k(w) := g(w) - h(\mathcal{A}^*w^k) - \langle \nabla h(\mathcal{A}^*w^k),\mathcal{A}^*w - \mathcal{A}^*w^k\rangle+ \frac{\sigma_k}{2}\|w - w^k\|^2 + \frac{\sigma_k}{2}\|\mathcal{A}^*w - \mathcal{A}^* w^k\|^2.
\end{equation}
One can observe that the majorized subproblem \eqref{subprob}  can be expressed equivalently as
\begin{equation}\label{subprob-eq}
w^{k+1}=\arg\min_w ~~\Big\{f^k(w) + \langle \delta^k,w\rangle \,\,\Big|\,\,w \in \mathbb{R}^{|\mathcal{E}|}_+ \Big\}.
\end{equation}
The following lemma gives the descent property of the objective function $f$.
\begin{lemma}\label{lemma2.1}
Let $w^{k+1}$ be an approximate solution to problem \eqref{subprob} such that {the stopping condition} \eqref{stop-cond} holds. Then we have that
\begin{equation*}
f(w^{k+1}) \leq f(w^k) - \frac{\sigma_k}{4}\|w^{k+1} - w^k\|^2.
\end{equation*}
If the sequence  $\{f(w^k)\}$ is bounded below, then  the sequence $\{f(w^k)\}$  converges to a finite number.
\end{lemma}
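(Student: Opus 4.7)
The plan is to run the standard DCA majorization--minimization argument, carefully tracking the inexactness encoded by $\delta^k$ and the role of the second proximal term $\tfrac{\sigma_k}{2}\|\mathcal{A}^*w - \mathcal{A}^*w^k\|^2$ in \eqref{subprob}. First I would record two elementary facts about the majorizer $f^k$ defined in \eqref{major-f}: (i) $f^k(w^k) = f(w^k)$, since the linearization of $h\circ\mathcal{A}^*$ around $w^k$ is tight at $w^k$ and both proximal quadratics vanish there; and (ii) the gradient inequality for the convex function $h$ gives, for every feasible $w\in\mathbb{R}^{|\mathcal{E}|}_+$, the majorization $f(w) \leq f^k(w) - \tfrac{\sigma_k}{2}\|w-w^k\|^2 - \tfrac{\sigma_k}{2}\|\mathcal{A}^*w - \mathcal{A}^*w^k\|^2$. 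Specializing (ii) to $w = w^{k+1}$ supplies the first half of the estimate.

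Next I would exploit the inexact optimality encoded in the equivalent form \eqref{subprob-eq}. The quadratic $\tfrac{\sigma_k}{2}\|w-w^k\|^2$ built into $f^k$ makes it $\sigma_k$-strongly convex in $w$, hence so is the perturbed objective $f^k(\cdot) + \langle \delta^k, \cdot\rangle$. Since $w^{k+1}$ is the exact minimizer of this perturbed objective over $\mathbb{R}^{|\mathcal{E}|}_+$, and since $w^k$ lies in $\mathbb{R}^{|\mathcal{E}|}_+$ (by induction from $w^0\geq 0$), the strong-convexity-at-minimizer inequality yields $f^k(w^{k+1}) \leq f^k(w^k) + \langle \delta^k, w^k - w^{k+1}\rangle - \tfrac{\sigma_k}{2}\|w^{k+1} - w^k\|^2$. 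Combining this with (i)--(ii) gives $f(w^{k+1}) \leq f(w^k) + \langle \delta^k, w^k - w^{k+1}\rangle - \sigma_k \|w^{k+1} - w^k\|^2 - \tfrac{\sigma_k}{2}\|\mathcal{A}^*w^{k+1} - \mathcal{A}^*w^k\|^2$.

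The main obstacle is the residual cross term $\langle \delta^k, w^k - w^{k+1}\rangle$, and the delicate point is that the stopping condition \eqref{stop-cond} is precisely engineered to absorb it. Applying Cauchy--Schwarz and then multiplying \eqref{stop-cond} through by $\|w^{k+1}-w^k\|$, I would obtain $|\langle \delta^k, w^k - w^{k+1}\rangle| \leq \|\delta^k\|\|w^{k+1}-w^k\| \leq \tfrac{\sigma_k}{4}\|w^{k+1}-w^k\|^2 + \tfrac{\sigma_k}{2}\|\mathcal{A}^*w^{k+1} - \mathcal{A}^*w^k\|^2$. Substituting into the previous display makes the two $\|\mathcal{A}^*\cdot\|^2$ contributions cancel and leaves a coefficient of $-3\sigma_k/4$ on $\|w^{k+1}-w^k\|^2$, which is stronger than the stated $-\sigma_k/4$. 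It is worth emphasizing that without the second proximal term in \eqref{subprob}, one would have no $-\tfrac{\sigma_k}{2}\|\mathcal{A}^*w^{k+1}-\mathcal{A}^*w^k\|^2$ surplus to kill the $\mathcal{A}^*$-dependent part of \eqref{stop-cond}, and the argument would break down; matching the proximal term to the stopping rule is the key design choice.

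Finally, for the second assertion, the descent inequality just established shows that $\{f(w^k)\}$ is non-increasing, and a non-increasing real sequence that is bounded below converges to a finite limit by the monotone convergence theorem.
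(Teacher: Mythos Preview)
Your argument is correct and follows essentially the same three-step chain as the paper: the tightness $f^k(w^k)=f(w^k)$, the inexact optimality inequality from \eqref{subprob-eq}, and the majorization $f^k(w)\geq f(w)+\tfrac{\sigma_k}{2}\|w-w^k\|^2+\tfrac{\sigma_k}{2}\|\mathcal{A}^*w-\mathcal{A}^*w^k\|^2$, with \eqref{stop-cond} absorbing the $\delta^k$-term. The only difference is that the paper uses the plain minimizer inequality $f^k(w^k)\geq f^k(w^{k+1})+\langle\delta^k,w^{k+1}-w^k\rangle$, whereas you additionally invoke $\sigma_k$-strong convexity of $f^k$ to harvest an extra $-\tfrac{\sigma_k}{2}\|w^{k+1}-w^k\|^2$; this is why you end up with the sharper constant $-3\sigma_k/4$ rather than the paper's $-\sigma_k/4$, but both suffice for the lemma.
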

\begin{proof}
By the update rule of the algorithm, we have
\begin{equation*}
\begin{array}{cl}
f(w^k) = f^k(w^k) &\geq f^k(w^{k+1}) + \langle \delta^k,w^{k+1} - w^k\rangle ~~~~\mbox{by \eqref{subprob-eq}}\\[6pt]
&\geq f^k(w^{k+1}) - \frac{\sigma_k}{4}\|w^{k+1} - w^k\|^2 -  \frac{\sigma_k}{2}\|\mathcal{A}^*w^{k+1} - \mathcal{A}^*w^k\|^2 ~~~~\mbox{by \eqref{stop-cond}}\\[6pt]
&\geq  f(w^{k+1}) + \frac{\sigma_k}{4}\|w^{k+1} - w^k\|^2 ~~~~\mbox{(by \eqref{major-f} and the convexity of $h$)}.
\end{array}
\end{equation*}
\end{proof}
The next theorem states the convergence of the inexact proximal DCA to a critical point.
We note that when the function $h$ is continuously differentiable, the notion of criticality and d-stationarity coincide, see, e.g.,
\cite[Section~3.2]{pang2017computing}.

\begin{theorem}\label{thm-subcon}
Suppose that the d.c. function $f$ is bounded below. Assume that the sequence $\{\sigma_k\}$ is convergent to $\sigma_{\infty}>0$. Let $\{w^k\}$ be the sequence generated by Algorithm~\ref{alg-mm}. Then every limit point of $\{w^k\}$, if exists, is a critical point of \eqref{cgl-mcp3}.
\end{theorem}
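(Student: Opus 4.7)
The plan is to combine the descent estimate from Lemma~\ref{lemma2.1} with a limit-passing argument applied to the first-order optimality of the inexact subproblem~\eqref{subprob}. The criticality inclusion $\mathcal{A}\nabla h(\mathcal{A}^*w^*)\in\partial g(w^*)$ will emerge once each perturbation in that optimality relation is shown to vanish along a subsequence converging to $w^*$, after which the closed-graph property of $\partial g$ for the proper convex lower-semicontinuous function $g$ closes the argument.

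The telescoping step comes first: iterating the bound in Lemma~\ref{lemma2.1} and using lower-boundedness of $f$ gives $\sum_{k\ge 0}\sigma_k\|w^{k+1}-w^k\|^2<\infty$. Since $\sigma_{k+1}=\rho_k\sigma_k$ with $\rho_k\in(0,1]$ forces $\{\sigma_k\}$ to be non-increasing and positive, the convergence hypothesis (understood as $\sigma_k\to\sigma_\infty>0$) yields $\|w^{k+1}-w^k\|\to 0$; hence on any subsequence with $w^{k_j}\to w^*$ one also has $w^{k_j+1}\to w^*$.

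Reading \eqref{subprob-eq} as $-\delta^k\in\partial f^k(w^{k+1})$ and using the definition \eqref{major-f} of $f^k$ produces
$$
\mathcal{A}\nabla h(\mathcal{A}^*w^k)-\sigma_k(w^{k+1}-w^k)-\sigma_k\mathcal{A}\mathcal{A}^*(w^{k+1}-w^k)-\delta^k\in\partial g(w^{k+1}).
$$
Along $\{k_j\}$: the two proximal terms vanish by $\|w^{k+1}-w^k\|\to 0$; continuity of $\nabla h$ from \eqref{grad-h} gives $\mathcal{A}\nabla h(\mathcal{A}^*w^{k_j})\to\mathcal{A}\nabla h(\mathcal{A}^*w^*)$; and the stopping condition \eqref{stop-cond} together with the operator-norm bound $\|\mathcal{A}^*u\|^2\le\|\mathcal{A}^*\|^2\|u\|^2$ yields $\|\delta^k\|\le\bigl(\tfrac{\sigma_k}{4}+\tfrac{\sigma_k\|\mathcal{A}^*\|^2}{2}\bigr)\|w^{k+1}-w^k\|\to 0$. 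Applying now the standard closed-graph property of the subdifferential of a proper convex l.s.c.\ function---if $v^j\in\partial g(u^j)$ with $u^j\to u$ and $v^j\to v$ then $v\in\partial g(u)$, proved by using l.s.c.\ together with the one-sided bound $g(u^j)\le g(u)+\langle v^j,u^j-u\rangle$ (obtained from the subgradient inequality with $y=u$) to force $g(u^j)\to g(u)$ and then passing $g(y)\ge g(u^j)+\langle v^j,y-u^j\rangle$ to the limit---with $u^j=w^{k_j+1}$ and $v^j$ equal to the left-hand side above yields $\mathcal{A}\nabla h(\mathcal{A}^*w^*)\in\partial g(w^*)$.

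The main obstacle will be controlling $\delta^k$ in the presence of $\|w^{k+1}-w^k\|$ in the denominator of \eqref{stop-cond}; the cancellation via $\|\mathcal{A}^*u\|^2\le\|\mathcal{A}^*\|^2\|u\|^2$ is the essential trick and, not coincidentally, motivates the inclusion of the second proximal term $\tfrac{\sigma_k}{2}\|\mathcal{A}^*w-\mathcal{A}^*w^k\|^2$ in \eqref{subprob}. Two minor points to address are the edge case $w^{k+1}=w^k$, interpreted as $\delta^k=0$ so that the inclusion above reduces immediately to criticality at that iterate, and the tacit reading $\sigma_\infty>0$ of the convergence hypothesis on $\{\sigma_k\}$, which is what converts the summability into $\|w^{k+1}-w^k\|\to 0$.
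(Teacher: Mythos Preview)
Your proof is correct and complete, but it proceeds by a genuinely different mechanism than the paper's. You extract the pointwise subdifferential inclusion
\[
\mathcal{A}\nabla h(\mathcal{A}^*w^k)-\sigma_k(w^{k+1}-w^k)-\sigma_k\mathcal{A}\mathcal{A}^*(w^{k+1}-w^k)-\delta^k\in\partial g(w^{k+1})
\]
and then pass to the limit using the closed-graph property of $\partial g$, which you justify from first principles. The paper instead keeps the \emph{variational} form of the subproblem optimality, namely $f^k(w)\ge f^k(w^{k+1})-\|\delta^k\|\|w^{k+1}-w\|$ for every $w$, takes limits in this scalar inequality (here the lower semicontinuity of $g$ enters simply as $\liminf g(w^{k+1})\ge g(w^\infty)$), and obtains that $w^\infty$ minimizes $w\mapsto g(w)-\langle\mathcal{A}\nabla h(\mathcal{A}^*w^\infty),w\rangle+\frac{\sigma_\infty}{2}\|w-w^\infty\|^2+\frac{\sigma_\infty}{2}\|\mathcal{A}^*w-\mathcal{A}^*w^\infty\|^2$, from which the first-order condition is read off. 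The trade-offs: your route is shorter and gets the criticality inclusion in one step, but relies on (and must justify) the graph closedness of $\partial g$; the paper's route is entirely self-contained and makes the role of l.s.c.\ transparent, at the cost of an extra ``recognize the minimizer, then differentiate'' step. Both arguments rest on $\|w^{k+1}-w^k\|\to 0$ and $\|\delta^k\|\to 0$, and hence on reading the hypothesis as $\sigma_\infty>0$; you flag this explicitly, while the paper leaves it implicit. Your control of $\|\delta^k\|$ via the operator-norm bound on $\mathcal{A}^*$ is exactly the right observation and is not spelled out in the paper's version.
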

\begin{proof}
Since $f$ is bounded below, it follows from Lemma~\ref{lemma2.1} that $\lim_{k\to\infty} f(w^k)$ exists and
$$
\lim_{k\to\infty} [ f(w^k) - f(w^{k+1}) ] = \lim_{k\to\infty} \|w^{k+1} - w^k\| = 0.
$$
Let $\{w^k\}_{k\in \kappa}$ be a sequence converging to a limit $w^{\infty}$. Then $\{w^{k+1}\}_{k\in \kappa}$ also converges to $w^{\infty}$.
By \eqref{subprob-eq}, we have for any $w$
$$
f^k(w) \geq f^k(w^{k+1}) + \langle \delta^k,w^{k+1} - w\rangle
\geq f^k(w^{k+1}) - \| \delta^k\| \|w^{k+1} - w\|.
$$
Using the definition \eqref{major-f} and taking limit $k(\in\kappa) \to\infty$ yield that for any $w$
$$
g(w) \geq g(w^{\infty}) + \langle \nabla h(\mathcal{A}^*w^{\infty}),\mathcal{A}^*w-\mathcal{A}^*w^{\infty}\rangle
 - \frac{\sigma_{\infty}}{2}\|w - w^{\infty}\|^2 -  \frac{\sigma_{\infty}}{2}\|\mathcal{A}^*w - \mathcal{A}^*w^{\infty}\|^2,
$$
where $\sigma_{\infty} = \lim_{k\to\infty} \sigma_k$. This is equivalent to saying that
$$
\begin{array}{cl}
w^{\infty}=\arg\min_w &\{g(w) - \langle \nabla h(\mathcal{A}^*w^{\infty}),\mathcal{A}^*w-\mathcal{A}^*w^{\infty}\rangle\\[6pt]
&+ \frac{\sigma_{\infty}}{2}\|w - w^{\infty}\|^2 +  \frac{\sigma_{\infty}}{2}\|\mathcal{A}^*w - \mathcal{A}^*w^{\infty}\|^2\}.
\end{array}
$$
Thus, $0\in \partial g(w^{\infty}) - \mathcal{A} \nabla h(\mathcal{A}^* w^{\infty})$. The proof is completed.
\end{proof}
In fact, when the data matrix $S$ is positive definite, we can prove that $f$ is bounded below and the sequence $\{w^k\}$ has a limit point.
\begin{theorem}\label{thm:2.2}
When $S\in\mathbb{S}^n_{++}$, the level set of \eqref{cgl-mcp3} $\{w\,|\,f(w)\leq \alpha\}$ is bounded for every $\alpha \in\mathbb{R}$. It further implies that $f$ is bounded below and there exists a limit point of the sequence $\{w^k\}$ generated by Algorithm~\ref{alg-mm}.
\end{theorem}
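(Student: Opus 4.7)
The plan is to bound $f$ from below by a function that is coercive in $\|w\|_1$; this kills two birds with one stone, giving both boundedness of sublevel sets and a finite lower bound on $f$. The cornerstone will be a cancellation specific to the Laplacian parametrization: for $w\in\mathbb{R}^{|\mathcal{E}|}_+$, the zero row-sum of $\mathcal{A}^*w$ together with its non-positive off-diagonals yield $\|\mathcal{A}^*w\|_{1,{\rm off}}=-\sum_{i\ne j}(\mathcal{A}^*w)_{ij}={\rm tr}(\mathcal{A}^*w)$, so $\langle \lambda I,\mathcal{A}^*w\rangle=\lambda\|\mathcal{A}^*w\|_{1,{\rm off}}$. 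Coupled with the estimate $h(\Theta)\le\lambda\|\Theta\|_{1,{\rm off}}$, which follows from the decomposition \eqref{dc-P} and the non-negativity of the MCP, the $\lambda$-dependent pieces in $g(w)-h(\mathcal{A}^*w)$ cancel exactly, leaving
\begin{equation*}
f(w)\;\ge\;-\log\det(\mathcal{A}^*w+J)+\langle S,\mathcal{A}^*w\rangle,\qquad w\in\mathbb{R}^{|\mathcal{E}|}_+,
\end{equation*}
with $f(w)=+\infty$ elsewhere since $g$ carries the indicator of $\mathbb{R}^{|\mathcal{E}|}_+$.

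Next I would estimate each of the two remaining terms. Expanding through the definition of $\mathcal{A}^*$ gives $\langle S,\mathcal{A}^*w\rangle=\sum_{(i,j)\in\mathcal{E}}w_{(ij)}(e_i-e_j)^T S(e_i-e_j)$, and since $S\in\mathbb{S}^n_{++}$ and $e_i\ne e_j$, the constant $c:=\min_{(i,j)\in\mathcal{E}}(e_i-e_j)^T S(e_i-e_j)$ is strictly positive. Hence $\langle S,\mathcal{A}^*w\rangle\ge c\|w\|_1$ on $\mathbb{R}^{|\mathcal{E}|}_+$. For the log-determinant I would invoke AM--GM on the eigenvalues of $M:=\mathcal{A}^*w+J\succeq 0$, namely $\det M\le ({\rm tr}\,M/n)^n$, to obtain $-\log\det M\ge n\log n - n\log {\rm tr}(M)$; using ${\rm tr}(\mathcal{A}^*w)=2\|w\|_1$ and ${\rm tr}(J)=1$ this yields
\begin{equation*}
f(w)\;\ge\;n\log n - n\log(2\|w\|_1+1) + c\|w\|_1\qquad\forall\,w\in\mathbb{R}^{|\mathcal{E}|}_+.
\end{equation*}
The right-hand side is continuous in $\|w\|_1\in[0,\infty)$ and diverges to $+\infty$ (since the linear term dominates the logarithmic one), so every sublevel set $\{w:f(w)\le\alpha\}$ is bounded, and $\inf f>-\infty$.

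For the second assertion, I would note that $g$ is l.s.c.\ and $h\circ\mathcal{A}^*$ is continuous, so $f$ is l.s.c.\ and its sublevel sets are closed, hence compact by the boundedness above. Lemma~\ref{lemma2.1} guarantees that $\{f(w^k)\}$ is monotonically decreasing, so $\{w^k\}$ is trapped in the compact sublevel set $\{w:f(w)\le f(w^0)\}$ and therefore admits a convergent subsequence. The one delicate point in the whole argument is recognising the Laplacian-induced cancellation in the first paragraph; everything after that is routine coercivity and compactness.
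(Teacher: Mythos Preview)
Your proof is correct, and its first step---the Laplacian cancellation $\langle\lambda I,\mathcal{A}^*w\rangle=\lambda\|\mathcal{A}^*w\|_{1,{\rm off}}$ together with $P\ge 0$, yielding $f(w)\ge -\log\det(\mathcal{A}^*w+J)+\langle S,\mathcal{A}^*w\rangle$ on $\mathbb{R}^{|\mathcal{E}|}_+$---is exactly the inclusion the paper uses (it writes $f(w)=-\log\det(\mathcal{A}^*w+J)+\langle S,\mathcal{A}^*w\rangle+P(\mathcal{A}^*w)$ and drops the non-negative $P$). The two proofs diverge in how they show boundedness of the sublevel sets of the right-hand side. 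The paper proceeds via convex-analysis machinery: it computes the recession function of the convex minorant, shows that the recession cone equals $\{w\in\mathbb{R}^{|\mathcal{E}|}_+:\langle S,\mathcal{A}^*w\rangle\le 0\}$, observes this is $\{0\}$ when $S\succ 0$, and then invokes Rockafellar's Theorems~8.4, 8.7 and 27.1 (plus Theorem~1.9 of Rockafellar--Wets for the existence of a minimum). Your route is instead a direct elementary coercivity estimate: $\langle S,\mathcal{A}^*w\rangle\ge c\|w\|_1$ from positive definiteness, and the AM--GM bound $-\log\det M\ge n\log n-n\log(2\|w\|_1+1)$, combined into an explicit lower bound that is coercive in $\|w\|_1$. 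The paper's approach isolates the weakest sufficient condition on $S$ (namely that $\langle S,\mathcal{A}^*w\rangle>0$ for every nonzero $w\ge 0$, which is strictly weaker than $S\succ 0$), whereas your approach is self-contained, avoids the stack of cited theorems, and produces a quantitative lower bound on $f$; for the theorem as stated (with $S\in\mathbb{S}^n_{++}$) both are equally valid.
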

\begin{proof}
See Appendix~\ref{app:A}.
\end{proof}

Theorem~\ref{thm-subcon} gives the subsequential convergence of Algorithm~\ref{alg-mm}. Sequential convergence and convergence rate can be established under  additional assumptions, including isolatedness of limit point and the KL (Kurdyka-\L{}ojasiewicz) property.  In fact, in the literature, they are many works on the analysis of the convergence rate of DCA and its variants; see, e.g., \cite{bolte2016majorization,attouch2009convergence,souza2016global,abbaszadehpeivasti2023rate,niu2022convergence}.
Next we establish sequential convergence and convergence rate results for Algorithm~\ref{alg-mm} with the assumption of the KL property. Our main result, Theorem~\ref{thm:2.6}, is mainly based on \cite[Proposition~4]{bolte2016majorization}. For completeness, we give its proof in the Appendix.
As a first step, we recall the definition of limiting subdifferential from \cite{rockafellar2009variational}.
\begin{definition}[Subdifferentials]
Let $f:\mathbb{R}^n \to (-\infty,+\infty]$ be a proper lower semicontinuous function and  $x\in {\rm dom} \, f := \{x\in \mathbb{R}^n  \mid f(x) < +\infty\}$. The Fr\'echet subdifferential of $f$ at $x$ is defined as
$$
\widehat{\partial}f(x) = \left\{v \in \mathbb{R}^n \,\,\Big|\,\, \liminf_{y\to x,y\neq x}\frac{f(y) - f(x) - \langle v,y-x\rangle}{\|x-y\|} \geq 0 \right\}.
$$
The limiting subdifferential of $f$ at $x$ is defined as
$$
\partial f(x)= \{ v \in \mathbb{R}^n \mid \exists \, x^k \to x,\, f(x^k) \to f(x),\,v^k \in \widehat{\partial}f(x^k),\, v^k \to v \mbox{ as } j \to \infty\}.
$$
\end{definition}
Next, we present the definition of the KL property \cite{bolte2007clarke,attouch2009convergence,bolte2007lojasiewicz}, which is a key tool in our convergence analysis. For $\alpha \in (0,+\infty]$, we denote by $\Phi_{\alpha}$ the class of functions $\phi:[0,\alpha) \to \mathbb{R}$ that satisfy the following conditions: (a) $\phi(0)=0$; (b) $\phi$ is positive, concave and continuous; (c) $\phi$ is continuously differentiable on $(0,\alpha)$, with $\phi' > 0$.
\begin{definition}[KL property]
Let $f:\mathbb{R}^n \to (-\infty,+\infty]$ be a proper lower semicontinuous function. The function $f$ is said to have the KL property at $\bar{x} \in {\rm dom} \,\partial f :=\{ x\in {\rm dom}\, f\mid \partial f(x) \neq \emptyset\}$, if there exist  $\alpha \in (0,+\infty]$, a neighborhood $V$ of $\bar{x}$ and a function $\phi \in \Phi_{\alpha}$ such that
$$
\phi'(f(x) - f(\bar{x})) {\rm dist}(0,\partial f(x)) \geq 1,\quad \forall\, x\in V \mbox{ and } f(\bar{x}) < f(x) < f(\bar{x}) + \alpha,
$$
where ${\rm dist}(x, C)=\min\{\|x-y\|\mid y\in C\}$ denotes the distance function to a closed set $C$. The function $f$ is said to be a KL function if it has the KL property at each point of ${\rm dom}\, \partial f$.
\end{definition}
When $\phi$ is of the form $\phi(s)=cs^{1-\theta}$ with $c>0$ and $\theta\in [0,1)$, the number $\theta$ is called a \L{}ojasiewicz exponent and we say $f$ has the KL property with  exponent $\theta$. Next we can give the sequential convergence and convergence rate results of Algorithm~\ref{alg-mm}.
\begin{theorem}\label{thm:2.6}
Suppose that the d.c. function $f$ in \eqref{cgl-mcp3} is bounded below. Assume that the sequence $\{\sigma_k\}$ is convergent to $\sigma_{\infty}>0$. Let $\{w^k\}$ be the sequence generated by Algorithm~\ref{alg-mm}.    Let $B^{\infty}$ be the set of limit points of $\{w^k\}$.
\begin{enumerate}
  \item[(I)] (Convergence result) The whole sequence $\{w^k\}$  converges to some critical point of \eqref{cgl-mcp3} if one of the following two conditions holds:
\begin{itemize}
  \item[(1)] $B^{\infty}$ contains an isolated point;

  \item[(2)]  $\{w^k\}$ is bounded, and for every $w^{\infty} \in B^{\infty}$, $f$ has the KL property at $w^{\infty}$.
\end{itemize}

\vspace{2mm}
\item[(II)] (Rate of convergence) If condition (2) holds, $w^k \to w^{\infty}$, and the function $f$ has the KL property at $w^{\infty}$ with exponent $\theta\in [0,1)$, then the following estimations hold:
    \begin{itemize}
      \item[(a)] if $\theta = 0$, then the sequence $\{w^k\}$  converges in a finite number of steps;
      \item[(b)] if $\theta \in (0,\frac{1}{2}]$, then there exit $c >0$ and $q \in [0,1)$ such that
          $$\|w^k-w^{\infty}\| \leq c q^k,\,\forall\,k\geq 1;$$
      \item[(c)] if $\theta \in (\frac{1}{2},1)$, then there exits  $c >0$ such that $$\|w^k-w^{\infty}\| \leq c k^{-(1-\theta)/(2\theta-1)},\,\forall\,k\geq 1.$$
    \end{itemize}
\end{enumerate}
\end{theorem}
\begin{proof}
See Appendix~\ref{app:A2}.
\end{proof}

\section{Semismooth Newton Method for the Subproblems}\label{sec-ssn}
In this section, we design a semismooth Newton (SSN) method for solving the subproblem \eqref{subprob}. Equivalently, we aim to solve the following problem for given $K$, $ \widetilde{\Theta}$, $\tilde{w}$,  and $\sigma$
\begin{equation}\label{subprob2}
  \begin{array}{cl}
  \min & -\log\det\, (\Theta + J) + \langle K,\Theta \rangle + \frac{\sigma}{2}\|\Theta - \widetilde{\Theta}\|^2 +  \frac{\sigma}{2}\|w - \tilde{w}\|^2 \\[6pt]
  {\rm s.t.} & \Theta = \mathcal{A}^*w,\\
  & w \in \mathbb{R}^{|\mathcal{E}|}_+.
  \end{array}
\end{equation}

\subsection{Properties of Proximal Mappings}
The properties of the proximal mappings associated with the log-determinant function and that of the indicator of the non-negative cone will be used subsequently in designing the SSN method. Therefore, we summarize the properties in this section.

Let $\mathcal{X}$ be a finite-dimensional real Hilbert space, $f:\, \mathcal{X}\rightarrow(-\infty,+\infty]$ be a closed proper convex function, and $\sigma>0$. The Moreau-Yosida regularization \cite{moreau1965proximite,yosida1980functional} of $f$ is defined by
\begin{equation}\label{def-MY}
\begin{array}{l}
\mathcal{M}_{f}^{\sigma}(x):=\min\limits_{z}\left\{f(z)+\frac{\sigma}{2}\|z-x\|^2\right\},\,\,\forall\, x\in \mathcal{X}.
\end{array}
\end{equation}
The unique optimal solution of \eqref{def-MY}, denoted by ${\rm Prox}_{f}^{\sigma}(x) $, is called the proximal point of $x$ associated with $f$, and ${\rm Prox}_{f}^{\sigma}:\, \mathcal{X}\rightarrow \mathcal{X}$ is called the associated proximal mapping.
Moreover, $\mathcal{M}_f^{\sigma}:\,\mathcal{X} \rightarrow \mathbb{R} $ is a continuously differentiable convex function  \cite{lemarechal1997practical,rockafellar2009variational}, and its gradient is given by
\begin{equation}\label{def-grad-MY}
\begin{array}{l}
\nabla \mathcal{M}_{f}^{\sigma} (x) = \sigma( x - {\rm Prox}_{f}^{\sigma}(x) ) ,\,\,\forall\,x\in\mathcal{X}.
\end{array}
\end{equation}

For notational simplicity, we denote
$\ell(\cdot) := -\log\det\, (\cdot)$.
As the log-determinant function $\ell$ is an important part in our problem, we  summarize the following results concerning the Moreau-Yosida regularization of $\ell$, see e.g.,
\cite[Lemma~2.1(b)]{wang2010solving}
\cite[Proposition~2.3]{yang2013proximal}.

\begin{proposition}\label{prop-logdet}
Let $\sigma > 0$. For any $X\in\mathbb{S}^n$ with its eigenvalue decomposition $X =  U\Lambda U^T$. Let $D$ be a diagonal matrix with $D_{ii} = (\sqrt{\Lambda_{ii}^2 + 4/\sigma} + \Lambda_{ii})/2$, $i=1,2,\dots,n$. Then we have that ${\rm Prox}_{\ell}^{\sigma}(X) = UDU^T$. Besides,
the proximal mapping ${\rm Prox}_{\ell}^{\sigma} :\,\mathbb{S}^n\to \mathbb{S}^n$ is continuously differentiable, and its directional derivative  at $X$ for any  $H\in\mathbb{S}^n$ is given by
$$
({\rm Prox}_{\ell}^{\sigma} )'(X)[H] = U[\Gamma \odot (U^T H U)]U^T,
$$
where $\odot$ denotes the Hadamard product, and
$\Gamma\in\mathbb{S}^n$ is defined by $$\Gamma_{ij} = \frac{1}{2} \left[ 1+ (\Lambda_{ii}+\Lambda_{jj})/(\sqrt{\Lambda_{ii}^2+4/\sigma}+\sqrt{\Lambda_{jj}^2+4/\sigma})\right] ,\,\,i,j=1,2,\dots,n.$$
\end{proposition}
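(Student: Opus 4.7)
The plan is to handle the three claims of the proposition in turn. To obtain the closed-form expression for the proximal point, I would start from the first-order optimality condition for the strongly convex problem defining ${\rm Prox}_{\ell}^{\sigma}(X)$. Since $\ell(Z) = -\log\det Z$ is smooth on $\mathbb{S}^n_{++}$ with $\nabla \ell(Z) = -Z^{-1}$, the unique minimizer $Z^\star$ satisfies $-Z^{\star -1} + \sigma(Z^\star - X) = 0$, equivalently
$$ \sigma (Z^\star)^2 - \sigma X Z^\star - I = 0 $$
after left-multiplying by $Z^\star$. Plugging in the eigendecomposition $X = U\Lambda U^T$ and looking for a solution of the form $Z^\star = U D U^T$ with $D$ diagonal makes $U$ and $U^T$ cancel, reducing the matrix equation to the scalar quadratic $\sigma D_{ii}^2 - \sigma \Lambda_{ii} D_{ii} - 1 = 0$ for each $i$. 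The requirement $D_{ii} > 0$ (needed for $Z^\star \in \mathbb{S}^n_{++}$, so that $\ell(Z^\star)$ is finite) selects the positive root, yielding exactly $D_{ii} = (\Lambda_{ii} + \sqrt{\Lambda_{ii}^2 + 4/\sigma})/2$.

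For the differentiability claim and the Hadamard-product formula, I would recognize ${\rm Prox}_{\ell}^{\sigma}$ as the L\"owner (spectral) operator induced by the scalar map $\phi(t) = (t + \sqrt{t^2 + 4/\sigma})/2$, which is continuously differentiable on all of $\mathbb{R}$ (indeed $C^\infty$). Invoking the classical differentiability theory for symmetric matrix functions, the spectral operator inherits continuous differentiability from $\phi$, and its directional derivative at $X$ in direction $H$ takes the form $({\rm Prox}_{\ell}^{\sigma})'(X)[H] = U[\Gamma \odot (U^T H U)] U^T$, where $\Gamma_{ij}$ is the first divided difference of $\phi$ at $(\Lambda_{ii},\Lambda_{jj})$: equal to $\phi'(\Lambda_{ii})$ when $\Lambda_{ii} = \Lambda_{jj}$ and to $(\phi(\Lambda_{ii}) - \phi(\Lambda_{jj}))/(\Lambda_{ii} - \Lambda_{jj})$ otherwise.

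It then remains to verify that these divided differences match the stated expression for $\Gamma_{ij}$. For the off-diagonal case with $\Lambda_{ii}\neq\Lambda_{jj}$, rationalizing $\sqrt{\Lambda_{ii}^2 + 4/\sigma} - \sqrt{\Lambda_{jj}^2 + 4/\sigma}$ by multiplying and dividing by the conjugate sum produces
$$\frac{\phi(\Lambda_{ii}) - \phi(\Lambda_{jj})}{\Lambda_{ii} - \Lambda_{jj}} = \frac{1}{2}\left(1 + \frac{\Lambda_{ii} + \Lambda_{jj}}{\sqrt{\Lambda_{ii}^2 + 4/\sigma} + \sqrt{\Lambda_{jj}^2 + 4/\sigma}}\right),$$
which is exactly the formula given, and specializing to $i=j$ recovers $\phi'(\Lambda_{ii})$, so the same expression handles coincident eigenvalues by continuity.

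The main obstacle is the middle step: cleanly invoking the differentiability theory of L\"owner operators to ensure that $C^1$ smoothness of $\phi$ is transferred to ${\rm Prox}_{\ell}^{\sigma}$ as a map on $\mathbb{S}^n$ and that the derivative admits the Hadamard-product representation. Once this structural fact is cited from the standard references on matrix functions, the remaining work is the elementary quadratic solve in the first step and the algebraic identity in the third step.
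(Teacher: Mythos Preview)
Your proposal is correct and follows the standard route to this result. Note, however, that the paper does not actually prove this proposition: it is stated as a summary of known facts with citations to \cite[Lemma~2.1(b)]{wang2010solving} and \cite[Proposition~2.3]{yang2013proximal}, and no argument is given in the text. Your outline---solving the first-order optimality condition via a diagonal ansatz, identifying ${\rm Prox}_{\ell}^{\sigma}$ as the L\"owner operator of $\phi(t) = (t+\sqrt{t^2+4/\sigma})/2$, and then computing the first divided differences---is essentially the proof one would find in those references, so there is no methodological divergence to discuss.
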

Next, we handle the indicator function $\delta_+(\cdot)$ of the non-negative {cone} $\mathbb{R}_+^{|\mathcal{E}|}$ {defined by} $\delta_+(w)=  0$ if $w \in\mathbb{R}_+^{|\mathcal{E}|}$, and $\delta_+(w)=  +\infty$ otherwise.
For any given vector $c\in\mathbb{R}^{|\mathcal{E}|}$, the proximal mapping of $\delta_+(\cdot)$ is the projection onto the non-negative cone, which is given by
$$
\Pi_+(c) := {\rm Prox}_{\delta_+}^1(c) = \arg\min_w \left\{\delta_+(w) + \frac{1}{2}\| w - c\|^2\right\} = \max\{c,0\}.
$$
Here $\max\{\cdot,0\}$ is defined in a component-wise fashion such that $\max\{t,0\} = t$ if $t\geq 0$, and $\max\{t,0\} = 0$ if $t< 0$.
The Clarke generalized Jacobian \cite[Definition~2.6.1]{clarke1990optimization} of $\Pi_+:\,\mathbb{R}^{|\mathcal{E}|} \rightarrow \mathbb{R}^{|\mathcal{E}|}$ is given by
$$
\partial \Pi_+(c):= \Bigg\{  {\rm Diag}(d) \in \mathbb{S}^{|\mathcal{E}|}_+\,\Bigg|\,
\begin{array}{l}
d_i = 1, \mbox{ if } c_i>0;\\
0 \leq d_i \leq 1, \mbox{ if } c_i=0; \\
d_i = 0, \mbox{ if } c_i<0;
\end{array}
\forall\,i = 1,2,\dots,|\mathcal{E}| \Bigg\},\,\,\forall\, c\in\mathbb{R}^{|\mathcal{E}|}.
$$
We denote by ${\rm Diag}(d)$ the diagonal matrix whose diagonal elements are the components of $d$.

\subsection{Semismooth Newton Method for \eqref{subprob2}}
The dual of problem \eqref{subprob2} admits the following form
\begin{equation}\label{dual-sub}
\max_Y~~ \{\Phi (Y):= \underset{\Theta,w}{\min}~~ L(\Theta,w;Y)\},
\end{equation}
where $L$ is the Lagrangian function associated with \eqref{subprob2} and it is given by
\begin{equation}
\begin{array}{l}
L(\Theta,w;Y) \\[7pt]
= -\log\det\, (\Theta + J) + \langle K,\Theta \rangle + \frac{\sigma}{2}\|\Theta - \widetilde{\Theta}\|^2 +  \frac{\sigma}{2}\|w - \tilde{w}\|^2 + \langle\Theta - \mathcal{A}^*w,Y \rangle + \delta_+(w)\\[7pt]
=-\log\det(\Theta + J) + \frac{\sigma}{2}\|\Theta - \widetilde{\Theta} + \frac{1}{\sigma}(K+Y)\|^2  - \frac{1}{2\sigma}\|K+Y\|^2 + \langle \widetilde{\Theta},K+Y \rangle\\[7pt]
+\delta_+(w) + \frac{\sigma}{2}\|w - \tilde{w} - \frac{1}{\sigma}\mathcal{A}Y\|^2 - \frac{1}{2\sigma}\|\mathcal{A}Y\|^2 - \langle \tilde{w},\mathcal{A}Y \rangle,\,\,
(\Theta,w,Y)\in\mathbb{S}^n\times\mathbb{R}_+^{|\mathcal{E}|}\times\mathbb{S}^n.
\end{array}
\end{equation}
By the Moreau-Yosida regularization \eqref{def-MY} and \eqref{def-grad-MY}, we can obtain the following expression of $\Phi$ by minimizing $L$ w.r.t. $(\Theta,w)$
\begin{equation*}
\begin{array}{cl}
\Phi (Y) :=
&\mathcal{M}_{\ell}^{\sigma}(\widetilde{\Theta} + J - \frac{1}{\sigma}(K+Y)) + \sigma \mathcal{M}^1_{\delta_+}(\tilde{w}+\frac{1}{\sigma} \mathcal{A}Y)\\[7pt]
&\langle  Y,\widetilde{\Theta} -\mathcal{A}^* \tilde{w}\rangle - \frac{1}{2\sigma}\|K+Y\|^2- \frac{1}{2\sigma}\|\mathcal{A}Y\|^2 + \langle \widetilde{\Theta},K\rangle,
\end{array}
\end{equation*}
and it is continuously differentiable with the gradient
\begin{equation*}
\nabla \Phi (Y) = {\rm Prox}_{\ell}^{\sigma} (\widetilde{\Theta} + J - \frac{1}{\sigma}(K+Y)) - \mathcal{A}^*  \Pi_+( \tilde{w}+\frac{1}{\sigma} \mathcal{A}Y) -J.
\end{equation*}

To find the optimal solution of the unconstrained maximization  problem  \eqref{dual-sub}, we can equivalently
solve the following nonlinear nonsmooth equation
\begin{eqnarray}\label{nonsmooth-eqn}
  \nabla \Phi (Y)  \;=\; 0.
\end{eqnarray}
In this paper, we apply a globally convergent and locally superlinearly convergent SSN method \cite{kummer1988newton,qi1993nonsmooth,sun2002semismooth} to solve the
above nonsmooth equation.
To apply the SSN, we need the following generalized Jacobian of $\nabla \Phi$ at $Y$,
which is the multifunction $\widehat{\partial}^2\Phi (Y):\mathbb{S}^n\rightrightarrows\mathbb{S}^n$ defined as follows:
\begin{align}
\left\{
\begin{array}{l}
V\in\widehat{\partial}^2\Phi (Y) \hbox{ if and only if there exists $D\in\partial\Pi_+(\tilde{w}+\frac{1}{\sigma}\mathcal{A}Y)$ such that}\\[6pt]
V[H] =  - \frac{1}{\sigma}({\rm Prox}_{\ell}^{\sigma})' (\widetilde{\Theta} + J - \frac{1}{\sigma}(K+Y))[H]
- \frac{1}{\sigma}\mathcal{A}^*  D \mathcal{A} H,\,\,\forall\,H\in\mathbb{S}^n.
\end{array}
\right.
\end{align}
Each element $V\in\widehat{\partial}^2\Phi (Y)$ is negative definite.
The implementation of the SSN method is given in
Algorithm~\ref{alg-ssn}. For its convergence, see \cite[Theorem~3]{li2018efficiently}.
\begin{algorithm}[!ht]
\caption{Semismooth Newton method for solving \eqref{nonsmooth-eqn}}
\label{alg-ssn}
{\bf Input:} $\sigma> 0$, $(\widetilde{\Theta},\tilde{w})\in\mathbb{S}^n \times \mathbb{R}^{|\mathcal{E}|}$, $Y^0\in\mathbb{S}^{n}$, $\bar{\eta}\in (0,1)$, $\tau\in(0,1]$, $\mu\in (0,0.5)$, $\rho\in (0,1)$. $j=0$. Repeat until convergence

\begin{description}
  \item[Step 1.] Find
  $V_j\in\widehat{\partial}^2\Phi_k(Y^j).$
   Solve $V_j [D] = -\nabla \Phi(Y^j)$ inexactly to obtain $D^j$ such that $ \| V_j [D^j] + \nabla \Phi(Y^j) \| \leq \min(\bar{\eta}, \| \nabla \Phi(Y^j)\|^{1+\tau})$.

\vspace{2mm}
  \item[Step 2.]  Find $m_j$ as the smallest non-negative integer $m$ for which
  $ \Phi(Y^j + \rho^m D^j) \geq \Phi(Y^j) + \mu \rho^m \langle \nabla \Phi(Y^j),D^j\rangle.$ Set $\alpha_j = \rho^{m_j}$.
  
\vspace{2mm}
    \item[Step 3.] $Y^{j+1} = Y^j + \alpha_j D^j$. $j \leftarrow j+1$.
\end{description}
\end{algorithm}

Lastly, we can recover the optimal solution $(\overline{\Theta},\bar{w})$ to the primal problem \eqref{subprob2} via the optimal solution to the dual problem $\overline{Y}:=\arg\max\Phi(Y)$ by
\begin{equation*}
\overline{\Theta} = {\rm Prox}_{\ell}^{\sigma} (\widetilde{\Theta} + J - \frac{1}{\sigma}(K+\overline{Y})) -J,\quad
\bar{w} =  \Pi_+( \tilde{w}+\frac{1}{\sigma} \mathcal{A}\overline{Y}).
\end{equation*}

\begin{remark}
We claim that the stopping condition \eqref{stop-cond} is achievable under appropriate implementation  of Algorithm~\ref{alg-ssn}. For solving subproblem \eqref{subprob}, we implement  Algorithm~\ref{alg-ssn} with $\sigma = \sigma_k$, $\widetilde{\Theta} = \mathcal{A}^*w^k$,  $\tilde{w} = w^k$, and $K=G^k$, and then it will return an approximate solution $Y^{k+1}$ of \eqref{nonsmooth-eqn} with some error term $E^k:=-\nabla\Phi(Y^{k+1})$, namely,
\begin{equation}\label{eqn-rn1}
{\rm Prox}_{\ell}^{\sigma_k} (\mathcal{A}^*w^k + J - \frac{1}{\sigma_k}(G^k+Y^{k+1})) - \mathcal{A}^*  \Pi_+(  w^k+\frac{1}{\sigma_k} \mathcal{A}Y^{k+1}) -J + E^k=0.
\end{equation}
We let
\begin{equation}\label{eqn-rn2}
w^{k+1} :=  \Pi_+(w^k+\frac{1}{\sigma_k} \mathcal{A}Y^{k+1})
\end{equation}
be the solution to \eqref{subprob}. Next we justify that $w^{k+1}$ is an approximate solution to \eqref{subprob} satisfying the stopping condition \eqref{stop-cond} as long as  $\|E^k\|_2$ is smaller than {a} certain value, where $\|\cdot\|_2$ is the spectral norm of a matrix.

The equation \eqref{eqn-rn1} implies that  $ \mathcal{A}^*w^{k+1} + J-E^k\in\mathbb{S}^n_{++}$. We further require that $r:=\|(\mathcal{A}^*w^{k+1} + J-E^k)^{-1}E^k\|_2 < 1$ such that the matrix $\mathcal{A}^*w^{k+1}+ J$ is also positive definite \cite[Theorem~2.3.4]{golub2013matrix}.
We know that \eqref{eqn-rn1} is equivalent to that
\begin{equation}\label{eqn-rn3}
{\bf 0}\in\mathcal{A}^*(w^{k+1} - w^k) - E^k + \frac{1}{\sigma_k}(G^k+Y^{k+1}) + \frac{1}{\sigma_k}\partial \ell (\mathcal{A}^*w^{k+1} + J- E^k),
\end{equation}
and \eqref{eqn-rn2} is equivalent to that
\begin{equation}\label{eqn-rn4}
{\bf 0}\in w^{k+1} - w^k - \frac{1}{\sigma_k}\mathcal{A}Y^{k+1} + \partial \delta_+ (w^{k+1}).
\end{equation}
Since $\ell$ is differentiable on $\mathbb{S}^n_{++}$, we can obtain by  combining \eqref{eqn-rn3} and \eqref{eqn-rn4}  that
\begin{equation*}
\begin{array}{cl}
&  \mathcal{A}[ {\sigma_k} E^k   +   (\mathcal{A}^*w^{k+1} - E^k + J)^{-1}   -  (\mathcal{A}^*w^{k+1} + J)^{-1}]\\[7pt]
\in &  -  \mathcal{A} (\mathcal{A}^*w^{k+1} + J)^{-1} + \mathcal{A}G^k + \sigma_k(w^{k+1} - w^k) + \sigma_k \mathcal{A} \mathcal{A}^* (w^{k+1} - w^k) + \partial \delta_+ (w^{k+1}).
\end{array}
\end{equation*}
By noting the optimality condition of \eqref{subprob}, we can let the error vector in \eqref{stop-cond} be
\begin{equation*}
\delta^k:= - \mathcal{A}[ {\sigma_k}E^k  +  (\mathcal{A}^*w^{k+1} - E^k + J)^{-1}   -    (\mathcal{A}^*w^{k+1} + J)^{-1}].
\end{equation*}
We can observe that $\delta^k\to 0$ as $\|E^k\|_2\to 0$ and therefore the stopping condition is achievable. Specifically, it follows from  \cite[Theorem~2.3.4]{golub2013matrix} that
\begin{equation*}
\begin{array}{cl}
\|\delta^k\| &\leq  \|\mathcal{A}\|_2 \left[ {\sigma_k}  \|E^k\|_2 + \|(\mathcal{A}^*w^{k+1} - E^k + J)^{-1} -  (\mathcal{A}^*w^{k+1} + J)^{-1}\|_2\right]\\[7pt]
&\leq  \|\mathcal{A}\|_2 \|E^k\|_2\left[{\sigma_k } + \frac{\|(\mathcal{A}^*w^{k+1} - E^k + J)^{-1}\|_2^2}{1-r} \right],
\end{array}
\end{equation*}
where $\|\mathcal{A}\|_2:=\sup \{\|\mathcal{A}\Theta\|\,|\,\|\Theta\|_2\leq 1\}$. Therefore, the stopping condition \eqref{stop-cond} holds if the following two checkable inequalities hold:
\begin{equation*}
\begin{array}{ll}
r:=\|(\mathcal{A}^*w^{k+1} + J-E^k)^{-1}E^k\|_2 &< 1,\\[7pt]
 \|\mathcal{A}\|_2 \|E^k\|_2\left[{\sigma_k} + \frac{\|(\mathcal{A}^*w^{k+1} - E^k + J)^{-1}\|_2^2}{1-r} \right] &\leq \frac{\sigma_k}{4}\|w^{k+1} - w^k\| +
\frac{\sigma_k \| \mathcal{A}^* w^{k+1}- \mathcal{A}^*w^k\|^2 }{2\|w^{k+1} - w^k\|}.
\end{array}
\end{equation*}
\end{remark}

\section{Numerical Results}\label{sec-exp}
In this section, we conduct experiments to evaluate the performance of our graph learning model \eqref{cgl-mcp} from two perspectives. First of all, we compare the effectiveness of our model  \eqref{cgl-mcp} on synthetic data with the convex model \eqref{cgl-l1} and the generalized graph learning (GGL) model \cite[p.~828]{egilmez2017graph}. The GGL model refers to the problem $ \min \{-\log\det\, (\Theta ) + \langle S,\Theta \rangle + \lambda \|\Theta\|_{1,{\rm off}}\,|\,\Theta\in\mathcal{L}_g(A) \} $, where the set of generalized graph Laplacian $\mathcal{L}_g(A)$ is the set \eqref{set-lap2} without the row-sum constraint $\Theta{\bf 1}={\bf 0}$.
Secondly, we compare the efficiency of our inexact proximal DCA (Algorithm~\ref{alg-mm}) for solving \eqref{cgl-mcp} with the method in \cite[Algorithm~1]{ying2020nonconvex}, which solves a sequence of weighted $\ell_1$-norm penalized subproblems by a projected gradient descent algorithm\footnote{Codes are available at \url{https://github.com/mirca/sparseGraph}.}.
Our method is terminated if
\begin{equation*}
  \frac{\|w^k - w^{k-1}\|}{1 + \|w^{k-1}\|}<\varepsilon,\,\,\mbox{or}\,\,\,
  \frac{\|{\rm obj}^k - {\rm obj}^{k-1}\|}{1 + \|{\rm obj}^{k-1}\|}<\varepsilon.
\end{equation*}
Here, ${\rm obj}^k$ denotes the objective function value of \eqref{cgl-mcp3} at the $k$-th iteration.
The method \cite[Algorithm~1]{ying2020nonconvex} is terminated by their default condition, which  also uses certain relative successive changes.
In addition, we set the parameter $\gamma$ in \eqref{fct-MCP} to be $1.5$.

\subsection{Performance of Different Models}
In this section, we compare our model  \eqref{cgl-mcp}  with the model \eqref{cgl-l1} and the model (GGL). We refer to the solution of \eqref{cgl-l1} obtained by the alternating direction method of multipliers (see Appendix~\ref{app:B} for its implementation) as an L1 solution.
We call $\mathcal{A}^*\tilde{w}$ as an MCP solution, where $\tilde{w}$ is the approximate solution obtained  by our inexact proximal DCA presented in Algorithm~\ref{alg-mm}.

\subsubsection{Synthetic Graphs}\label{sec:syn-graphs}
We simulate graphs from three standard ensembles of random graphs:
1) Erd\H{o}s-R\'{e}nyi  $\mathcal{G}^{(n,p)}_{\rm ER}$, where two nodes are connected independently with probability $p$;
2) Grid graph, $\mathcal{G}^{(n)}_{\rm grid}$, consisting of $n$ nodes connected to their four nearest neighbours;
3) Random modular graph, $\mathcal{G}_M^{(n,p_1,p_2)}$, with $n$ nodes and four modules (subgraphs) where the probability of an edge connecting two nodes across modules and within modules are $p_1$ and $p_2$, respectively.
Given any graph structure from one of these three ensembles, the edge weights are uniformly sampled from the interval $[0.1,3]$. Then we obtain a valid Laplacian matrix as the true precision matrix.

\begin{figure}[htbp]
  \centering
  \includegraphics[width=0.9\textwidth]{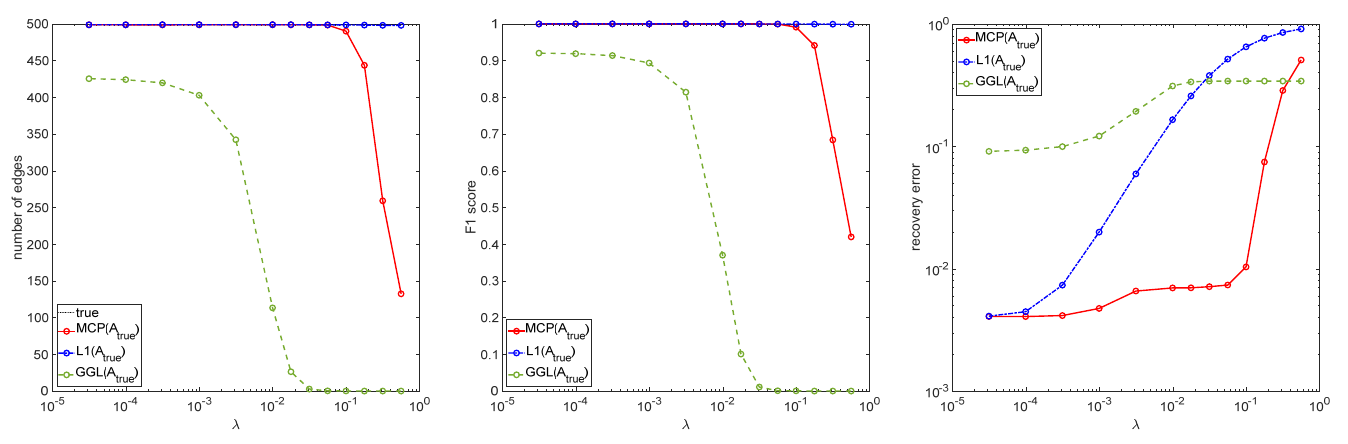}
  \caption{On  Erd\H{o}s-R\'{e}nyi graph, $\mathcal{G}^{(100,0.1)}_{\rm ER}$.  The true connectivity matrix  $A = A_{\rm true}$ is used.}\label{fig-er1}
\end{figure}
\begin{figure}[htbp]
  \centering
  \includegraphics[width=0.9\textwidth]{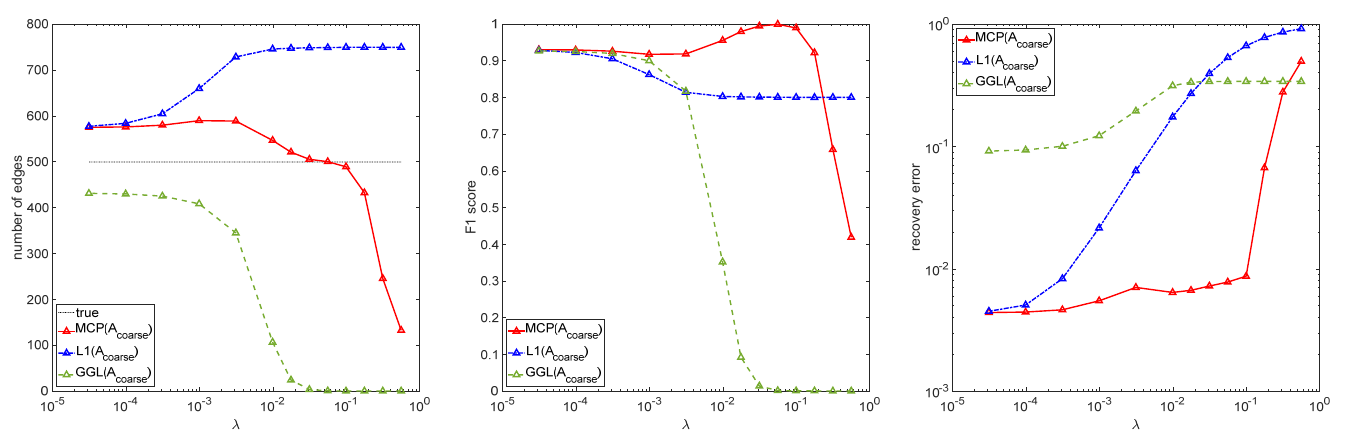}
  \caption{On  Erd\H{o}s-R\'{e}nyi graph, $\mathcal{G}^{(100,0.1)}_{\rm ER}$. {We use} a coarse estimation of the  true sparsity pattern and
  $A=A_{\rm coarse}$.}\label{fig-er2}
\end{figure}
\begin{figure}[htbp]
  \centering
  \includegraphics[width=0.9\textwidth]{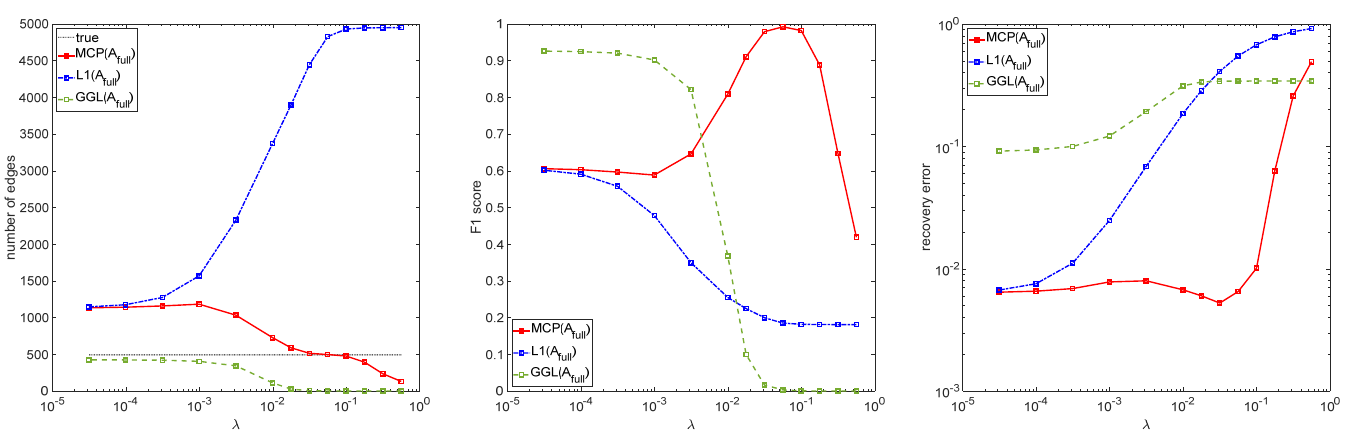}
  \caption{On  Erd\H{o}s-R\'{e}nyi graph, $\mathcal{G}^{(100,0.1)}_{\rm ER}$. {We use a full connectivity matrix} and  $A=A_{\rm full}$ is input.}\label{fig-er3}
\end{figure}
\begin{figure}[htbp]
  \centering
  \includegraphics[width=0.9\textwidth]{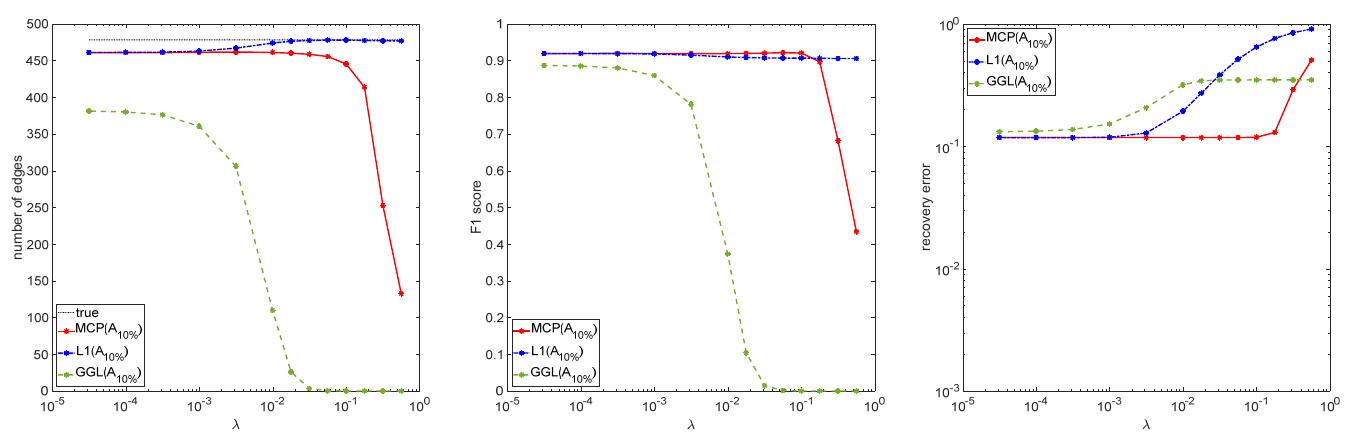}
  \caption{On  Erd\H{o}s-R\'{e}nyi graph, $\mathcal{G}^{(100,0.1)}_{\rm ER}$. {We use} a rough estimation of the true sparsity pattern and $A = A_{\rm 10\%}$ which is not exactly accurate.}\label{fig-er4}
\end{figure}

For the diversity of comparisons, we test the performance of different models  with various connectivity constraints: 1) $A = A_{\rm true}$, where $A_{\rm true}$ is the true connectivity matrix; 2) $A = A_{\rm coarse}$, where $A_{\rm coarse}$ is a coarse estimation of the truth, i.e., $\{(i,j)\,|\,(A_{\rm coarse})_{ij} =1 \}\supseteq \{(i,j)\,|\,(A_{\rm true})_{ij} =1 \}$. Specifically, in our experiments we set the cardinality of the former  as 1.5 times that of the latter by randomly changing some zero entries in $A_{\rm true}$ into ones; 3) $A = A_{\rm full}$, where $A_{\rm full}$ is the full connectivity matrix; 4) $A = A_{\rm d\%}$, where $A_{\rm d\%}$ is an inaccurate estimation of the truth and is obtained by randomly
replacing $d\%$ of the ones in  $A_{\rm true}$ by zero entries. Different connectivity constraints are reasonable since in some cases the true graph topology might not be available while one can obtain its coarse or inaccurate estimation based on some prior knowledge. Even without any prior knowledge, one can assume that the graph is fully connected and will estimate both the graph structure and edge weights. To measure the performance of different models, we adopt two metrics:
\begin{itemize}
\item[(1)]
recovery error $\frac{\|{\Theta} - L_{\rm true}\|}{\| L_{\rm true} \|}$, which is the relative error between the true precision matrix $L_{\rm true}$ and the estimated one $\Theta$;
\item[(2)]
F1 score $ \frac{2({\rm precision \cdot recall})}{{\rm precision + recall}} = \frac{2{\rm tp}}{2{\rm tp} + {\rm fp} + {\rm fn}} $, which is a standard metric to evaluate the performance {on} detecting edges. Here tp denotes true positive (the model correctly identifies an edge); fp denotes false positive (the model incorrectly identifies an edge); fn denotes false negative (the model fails to identify an edge).
\end{itemize}

We first compare L1, MCP, and GGL  solutions on  Erd\H{o}s-R\'{e}nyi graph, $\mathcal{G}^{(100,0.1)}_{\rm ER}$. We set $\varepsilon = 10^{-6}$ and the sample size $k=5000n$, and the results reported are the average over 10 simulations.
Fig.~\ref{fig-er1}---\ref{fig-er4} plot the number of edges, F1 score, and recovery error with respect to a sequence of $\lambda$ under different connectivity constraints.
As shown in Fig.~\ref{fig-er1}, with the true sparsity pattern, both MCP and L1 solutions can perfectly identify the edges; while GGL solutions only achieve the F1 score of 0.9 mainly due to its violation of the  constraint $\Theta{\bf 1}={\bf 0}$.
In terms of the recovery error which compares the edge weights of the true and estimated graphs, MCP solutions perform well while the other two tend to be biased for most values of $\lambda$. Fig.~\ref{fig-er1} shows that MCP solutions can be better than the other two models  for estimating the edge weights when the true sparsity pattern is given.
If the true connectivity matrix is unknown and only a coarse estimation of the connectivity matrix is available, we can see from Fig.~\ref{fig-er2} that only the MCP solution with $\lambda$ roughly in the interval $(10^{-2},10^{-1})$ can detect most of the edges and achieve the recovery error of $10^{-2}$. This has  demonstrated the ability of the MCP for promoting sparsity and avoiding bias. In addition, we can see from Fig.~\ref{fig-er3} that  without any estimation of the sparsity pattern, the results will deteriorate greatly compared to the results in Fig.~\ref{fig-er1} and \ref{fig-er2}. Even though the problem becomes much more difficult in this case, there still exists an MCP solution for which the F1 score is nearly 1, i.e., it can almost recover the true sparsity pattern. Meanwhile, the recovery error of the MCP solution is better than that of L1 or GGL solutions. We can also see from the left panel of Fig.~\ref{fig-er3} that the number of edges of L1 solutions change approximately from 1000 to 5000 when $\lambda$ changes from $10^{-4}$ to $10^{-1}$, and this number is always  larger than the  truth  value of around 500. The number of edges of the GGL solutions is slightly lower than the truth and tends to zero as $\lambda$ increases,
which can happen because GGL does not impose the row-sum constraint $\Theta{\bf 1} =0.$
Fig.~\ref{fig-er3} shows that without any prior knowledge of the true sparsity pattern, the MCP solutions are likely to be closer to the truth with a proper choice of  $\lambda$; L1 solution can hardly recover the ground truth; and GGL solutions can detect most of the  true edges when $\lambda$ is small but the solutions' values are greatly biased. Therefore, when prior knowledge about the connectivity matrix is not available, the MCP solutions are generally far more superior than the L1 or GGL solutions in terms of both structure inference and edge weights estimation.
As can be seen in Fig.~\ref{fig-er4}, the recovery error suffers from the incorrect prior information on the connectivity matrix, but the error is consistent with the percentage ($10\%$) of wrongly eliminated edges in the input connectivity matrix.
The results in Fig.~\ref{fig-er4} suggest that when the sample size is large enough ($k=5000n$), a fully connected prior connectivity matrix is preferable to an inaccurate estimation of the graph structure.

Additional numerical results for grid graph
$\mathcal{G}_{\rm grid}^{(100)}$ and random modular graph $\mathcal{G}_M^{(100,0.05,0.3)}$ are given in Appendix~\ref{app:C}.

\subsubsection{Real Data}\label{sec:4.1.2}
In this section, we test on a collection of real data sets: {\it animals}, {\it senate}, and {\it temperature}. Again, we run through a sequence of parameter values for $\lambda$ starting from a small scalar to a large enough number such that the resulting graph is almost fully connected or empty. We set $\varepsilon = 10^{-4}$.

\begin{figure}[htbp]
  \centering
  \includegraphics[width=1\textwidth]{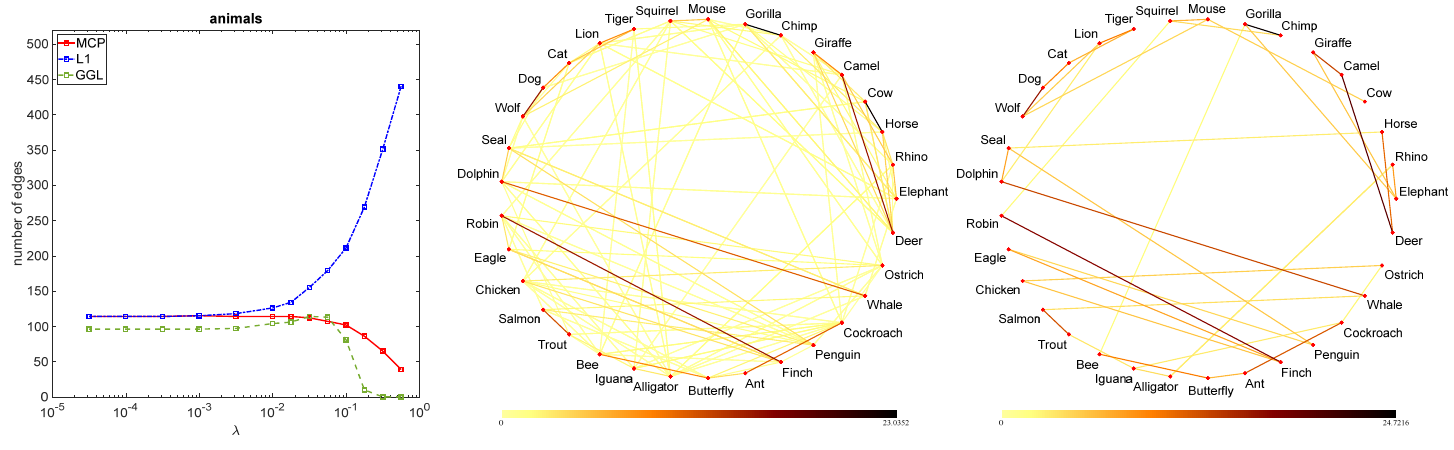}
  \caption{On {\it animals} data set. Left: the number of edges against the penalty parameter $\lambda$.
  Middle: dependency graph  of the MCP solution with  $\lambda=0$. Right: dependency graph  of the MCP solution with  $\lambda=10^{-0.25}$.}\label{graph-animals}
\end{figure}

\begin{figure}[htbp]
  \centering
  \includegraphics[width=1\textwidth]{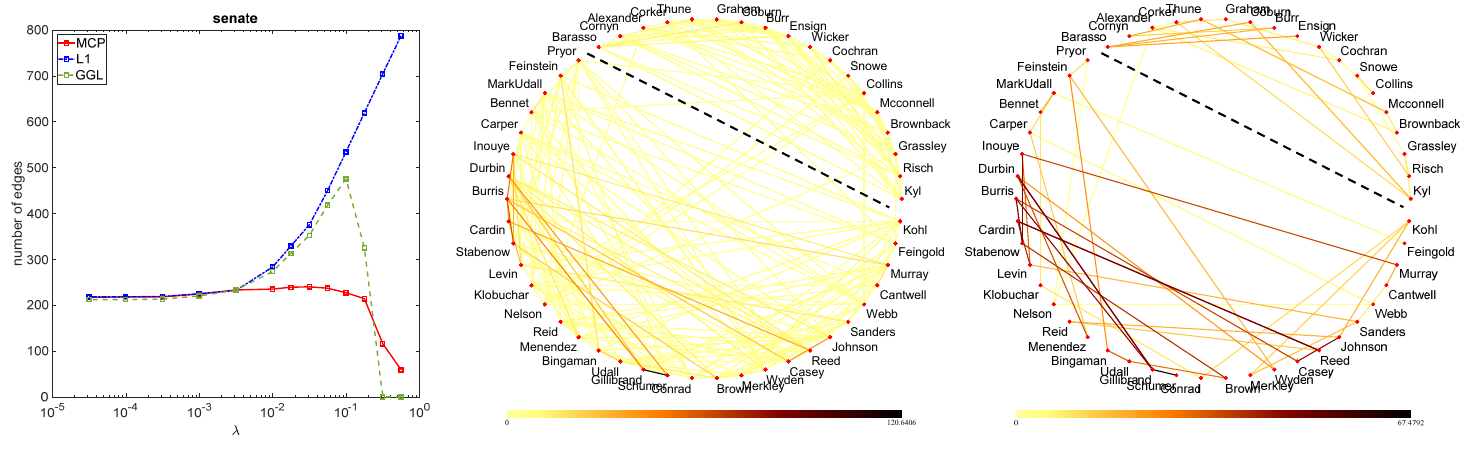}
  \caption{On {\it senate} data set. Left: the number of edges against the penalty parameter $\lambda$.
  Middle: dependency graph  of the MCP solution with  $\lambda=0$. Right: dependency graph  of the MCP solution with  $\lambda=10^{-0.25}$. The nodes at the top right (resp. bottom left) of the black dotted line represent Republicans (resp. Democrats).}\label{graph-senate}
\end{figure}

{\it  Animals:} The  {\it animals} data set \cite{kemp2008discovery} consists of binary values assigned to $k=102$ features for $n=33$ animals. Each feature denotes a true-false answer to a question, such as `has lungs?', `is warm-blooded?', `live in groups?'.
The left panel of Fig.~\ref{graph-animals} plots the number of edges against the penalty parameter $\lambda$ of the three models considered
in the previous subsection on the {\it animals} data set. The blue curve shows that increasing the penalty parameter $\lambda$ cannot promote sparsity in the L1 solutions, due to the presence of the constraint $\Theta{\bf 1}={\bf 0}$. In fact, when $\lambda$ is large, the majority of the edge weights $\Theta_{ij}$ of the L1 solution are small non-zero numbers satisfying the zero row-sum constraints. Therefore, the learned graph is almost fully connected  when $\lambda$ is large. On the other hand, we can see from the red curve that tuning the penalty parameter $\lambda$ will result in MCP solutions with various sparsity levels. It offers sparser solutions compared to L1 solutions, which are especially useful when the data contains a large number of nodes and a sparser graph is desired for better interpretability. Even though the ground truth is not available for real data, MCP solutions can provide solutions with a wider range of sparsity levels compared with L1 solutions and therefore MCP solutions will be preferable in real applications.
The middle and right panels of Fig.~\ref{graph-animals} illustrate the dependency networks  of the MCP solution without the regularization term ($\lambda=0$) and the sparsest dependency network among the MCP solutions ($\lambda=10^{-0.25}$), respectively. The right graph contains $39$ edges, which is more interpretable compared to the middle one with $114$ edges. We can clearly see from the right panel that similar animals, such as gorilla and chimp, dolphin and whale, are connected with edges of large weights, which coincides with one's expectations.

{\it Senate:} The {\it senate} data set \cite[Section~4.5]{lake2018emergence} contains 98 senators and their 696 voting records for  the 111th United States Congress, from January 2009 to January 2011. Similar to the {\it animals} data, the {\it senate} data consists of binary values, where 0's or 1's correspond to no or yes votes, respectively. There exist some missing entries when one senator is not present for certain votings.
To avoid missing entries, we select a submatrix ($50\times 293$) of the original data matrix ($98\times 696$) consisting of $50$ senators and $293$ voting records without missing entries. We then run through a sequence of $\lambda$ and plot the number of edges of estimated solutions in the left panel of Fig.~\ref{graph-senate}. Additionally, we illustrate the dependency networks of MCP solutions with $\lambda=0$ and $\lambda=10^{-0.25}$ (the sparsest one) in the middle and right panels of Fig.~\ref{graph-senate}, respectively. As can be seen, the middle graph, containing $218$ edges, is relatively dense and  not easy to interpret. In the right panel of Fig.~\ref{graph-senate}, the nodes at the bottom left (resp. top right) of the black dotted line represent Democrats (resp. Republicans).
The figure clearly shows the divide between Democrats and Republicans, and we can see that the two components are only connected by one edge between Democrat Nelson and Republican Corker.
In addition to the use of a small subset of the {\it senate} data to avoid missing data, we note that a generalized sample covariance matrix $S$ can be constructed according to the procedure in \cite[Equation~(2)]{kolar2012estimating} and \cite[Equation~(7)]{cai2016minimax}
to handle missing data in the area of inverse covariance estimation. Therefore, we can analyze the relationships among all of the $98$ senators. Table~\ref{tab:senate} compares the resulting graph without penalty ($\lambda=0$) and the sparsest graph offered by MCP solutions. We can see that without penalty, there are $2.45\%$ edges across the nodes representing Democrats and nodes representing Republicans. In contrast, the use of the MCP decreased this number to $1.56\%$. We believe that fewer edges across Democrats and Republicans might be more reasonable as the two parties are rarely correlated. Therefore, the \eqref{cgl-mcp} model can be a reasonable model to analyze the {\it senate} data.

{\it Temperature:} The {\it temperature} data set\footnote{NCEP Reanalysis data is provided by the NOAA/OAR/ESRL PSD, Boulder, Colorado, USA, from their Web site at \url{https://www.esrl.noaa.gov/psd/}.} we use contains the daily temperature measurements collected from $45$ states in the US over $16$ years (2000-2015) \cite{kalnay1996ncep}. Therefore, there are $k=5844$ samples for each of the $n=45$ states. Fig.~\ref{graph-temp} plots the number of edges of the graphs learned by three different models against the penalty parameter $\lambda$,  the dependency networks of the MCP solution without regularization ($\lambda=0$) and the sparsest one among the MCP solutions ($\lambda = 0.1$). As can be seen from the middle panel of Fig.~\ref{graph-temp}, the graph without regularization is quite sparse, and it is not much difference from the sparsest graph offered by MCP solutions shown in the right panel.
As plotted in the two networks, the states that are geographically close (especially contiguous) to each other are generally connected, since temperature values tend to be similar in nearby areas.
On the {\it temperature} data set, MCP and L1 solutions seem to be similar, as indicated by the left panel. One possible explanation could be that the {\it temperature} data admits certain sparsity intrinsically and it would result in a fairly sparse solution even without regularization.

\begin{table}[htbp]
  \centering
\begin{tabular}{lccc}
  \hline
   $\lambda$ & $({\rm nnz}_{\rm D},{{\rm nnz}_{\rm D}}/{{\rm nnz}})$ & $({\rm nnz}_{\rm R},{{\rm nnz}_{\rm R}}/{{\rm nnz}})$ & $({\rm nnz}_{\rm Cross},{{\rm nnz}_{\rm Cross}}/{{\rm nnz}})$ \\\hline
  0            & $(211,36.95\%)$ & $(346,60.60\%)$ & $(14,2.45\%)$ \\
  $10^{-0.25}$ & $(51,39.84\%)$ & $(75,58.59\%)$ & $(2,1.56\%)$ \\
  \hline
\end{tabular}
  \caption{nnz: the number of edges; ${\rm nnz}_{\rm D}$: the number of edges connecting two nodes representing Democrats; ${\rm nnz}_{\rm R}$: the number of edges connecting two nodes representing Republicans; ${\rm nnz}_{\rm Cross}$: the number of edges connecting one node representing Democrats and one node representing Republicans.}\label{tab:senate}
\end{table}

\begin{figure}[htbp]
  \centering
  \includegraphics[width=1\textwidth]{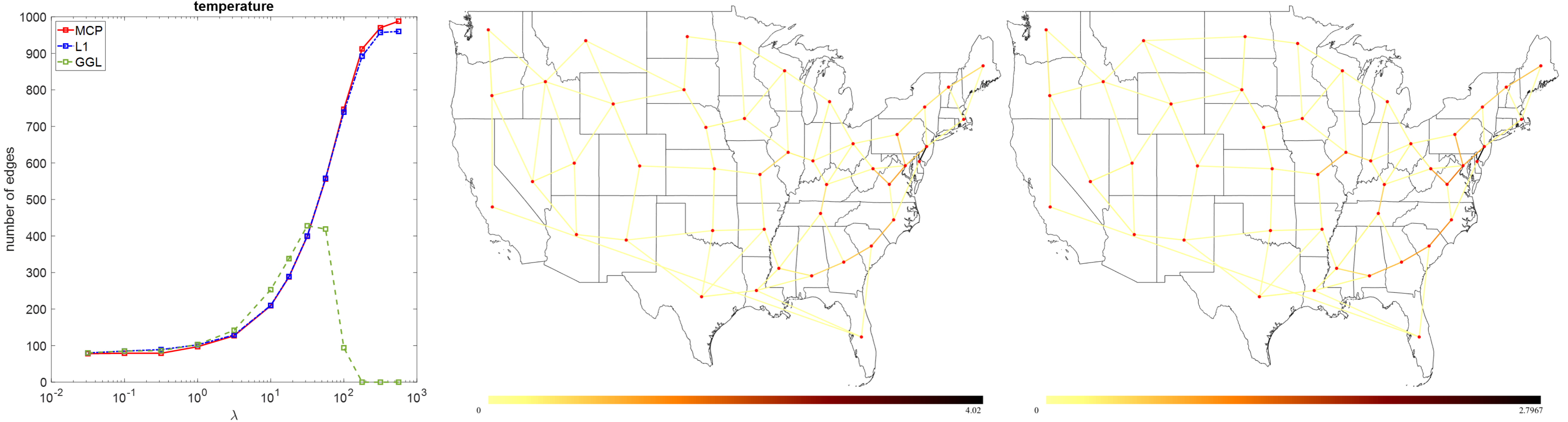}
  \caption{On {\it temperature} data set. Left: the number of edges against the penalty parameter $\lambda$.
  Middle: dependency graph  of the MCP solution with  $\lambda=0$. Right: dependency graph  of the MCP solution with  $\lambda=10^{-1}$.}\label{graph-temp}
\end{figure}

\subsection{Computational Efficiencies of Different Methods}
In this section, we compare our inexact proximal DCA (Algorithm~\ref{alg-mm}) with the method in \cite[Algorithm~1]{ying2020nonconvex}, which is implemented in a R package referred to as `NGL', for solving a special case of \eqref{cgl-mcp} for which $A$ is restricted to be the full connectivity matrix.

\subsubsection{Synthetic Graphs}
We generate graphs from the ensemble of random modular graphs described in section \ref{sec:syn-graphs} to evaluate the efficiency of the two methods. We set $p_1=0.005$, $p_2 = 0.25$, $\lambda = 0.005$ and $\varepsilon = 10^{-6}$. The graphs are of different dimensions $n\in\{160,240,320,400\}$ and the sample size $k$ is set to be $5000n$. Note that our algorithm can use the information of the true graph topology $A$; while the NGL will not incorporate $A$ in their solver. For a fair comparison, we merely input a full connectivity matrix into our algorithm.

\begin{table}[!h]\centering
\setlength{\tabcolsep}{1.2mm}{\small
\begin{tabular}{lccccccccrr}
\toprule
Nodes & \multicolumn{2}{c}{Edges} & \multicolumn{2}{c}{Time} & \multicolumn{2}{c}{F1 score} & \multicolumn{2}{c}{Recovery error} & \multicolumn{2}{c}{Objective value}\\
\cmidrule(l){2-3} \cmidrule(l){4-5} \cmidrule(l){6-7} \cmidrule(l){8-9} \cmidrule(l){10-11}
$n$ & DCA & NGL & DCA & NGL & DCA & NGL & DCA & NGL & DCA & NGL \\
\midrule
160 &829  & 1078  &20.3 & 15.0 & 0.99 & 0.85 & 7.3e$-$03 & 5.1e$-$03 & $-$2.5607e+02 & $-$2.5566e+02  \\
240 &1684 & 23832 &12.6 & 0.8  & 0.94 & 0.14 & 1.7e$-$02 & 1.0e+01   & $-$6.3314e+01  & 2.2850e+03  \\
320 &2810 & --    &15.9 & --   & 0.91 & --   & 2.2e$-$02 & --        & $-$1.0257e+02  & \multicolumn{1}{c}{--}\\
400 &4149 & --    &46.6 & --   & 0.89 & --   & 2.7e$-$02 & --        & $-$1.4570e+02  & \multicolumn{1}{c}{--}           \\
\bottomrule
\end{tabular}
}
\caption{Performances of DCA and NGL on modular graph $\mathcal{G}_M^{(n,0.005,0.25)}$. $\lambda=0.005$. $\varepsilon = 10^{-6}$. `--' means the method stops due to internal errors.}\label{table-mod-eff}
\end{table}

\begin{table}[!h]\centering
\setlength{\tabcolsep}{1.2mm}{\small
\begin{tabular}{clccccrc}
\toprule
Problem & $\lambda$ & \multicolumn{2}{c}{Edges} & \multicolumn{2}{c}{Time} & \multicolumn{2}{c}{Objective value}\\
\cmidrule(l){3-4} \cmidrule(l){5-6} \cmidrule(l){7-8}
&  & DCA & NGL & DCA & NGL & DCA & NGL  \\
\midrule
                    &  $10^{-0.5}$ &65 & 53 &1.1 & 0.3 &   $-$3.7797e+01 & $-$3.8267e+01  \\
                    &  $10^{-1.0}$ &105 & 68 &1.3 & 0.5 &  $-$4.6493e+01 & $-$4.6232e+01  \\
{\it animals}       &  $10^{-1.5}$ &115 & 85 &1.0 & 1.1 &  $-$4.7896e+01 & $-$4.7759e+01  \\
$(n=33)$            &  $10^{-2.0}$ &118 & 96 &0.8 & 1.0 &  $-$4.8053e+01 & $-$4.8070e+01  \\
                    &  $10^{-2.5}$ &119 & 102 &0.4 & 1.3 & $-$4.8052e+01 & $-$4.8115e+01  \\[0.2cm]
                    &  $10^{-0.5}$ &119 &-- &5.1  &--  & $-$9.3342e+01 &--    \\
                    &  $10^{-1.0}$ &243 &-- &10.3 &--  & $-$1.1016e+02 &--    \\
{\it senate}        &  $10^{-1.5}$ &250 &-- &8.7  &--  & $-$1.1345e+02 &--    \\
$(n=50)$            &  $10^{-2.0}$ &245 &-- &5.8  &--  & $-$1.1383e+02 &--    \\
                    &  $10^{-2.5}$ &232 &-- &6.8  &--  & $-$1.1409e+02 &--    \\[0.2cm]
                         &  $10^{0.5}$  &127 &-- &5.7 &--  & 1.2958e+02 &--    \\
                         &  $10^{0}$    &97  &-- &2.0 &--  & 1.0489e+02 &--    \\
{\it temperature}        &  $10^{-0.5}$ &79  &-- &3.5 &--  & 8.8216e+01 &--    \\
$(n=45)$                 &  $10^{-1.0}$ &78  &-- &3.8 &--  & 8.3571e+01 &--    \\
                         &  $10^{-1.5}$ &78  &-- &4.2 &--  & 8.2770e+01 &--    \\[0.2cm]
                    &  $10^{0}$    &7956 &-- &62.6  &--  & 8.5260e+02 &--    \\
                    &  $10^{-0.5}$ &1631 &-- &370.9 &--  & 3.6811e+02 &--    \\
{\it lymph}         &  $10^{-1.0}$ &2014 &-- &296.8 &--  & 2.0798e+02 &--    \\
$(n=587)$           &  $10^{-1.5}$ &3155 &-- &226.2 &--  & 1.7707e+02 &--    \\
                    &  $10^{-2}$   &3949 &-- &136.4 &--  & 1.7248e+02 &--    \\[0.2cm]
                    &  $10^{0}$    &8805 &-- &111.6 &--  & 9.4494e+02  &--    \\
                    &  $10^{-0.5}$ &1514 &-- &897.1 &--  & 5.8620e+01  &--    \\
{\it estrogen}      &  $10^{-1.0}$ &2145 &-- &521.3 &--  & $-$1.0362e+02 &--    \\
$(n=692)$           &  $10^{-1.5}$ &2939 &-- &511.9 &--  & $-$1.3415e+02 &--    \\
                    &  $10^{-2}$   &3461 &-- &369.5 &--  & $-$1.3833e+02 &--    \\
\bottomrule
\end{tabular}
}
\caption{Performances of DCA and NGL on real data. $\varepsilon = 10^{-4}$. `--' means the method stops due to internal errors.}\label{table-animals-eff}
\end{table}

Table~\ref{table-mod-eff} compares the computational time, F1 score, recovery error, and objective value of problem \eqref{cgl-mcp2}  of the two methods DCA and NGL for solving instances with different numbers of nodes $n$.
Table~\ref{table-mod-eff} shows that our DCA can successfully solve all instances with satisfactory F1 score and recovery error. In contrast, the NGL only succeeded in solving the problem when $n=160$; and it terminated prematurely when $n=240$ as shown by the fairly low F1 score, unreasonably large recovery error and objective value. For relatively large dimensions with $n=320$ and $n=400$, the NGL does not return reasonable solutions as it terminates prematurely due to internal errors caused by the singularity of certain matrices. It seems that the NGL might not be reliable for solving  the model \eqref{cgl-mcp}  on random modular graphs. From Table~\ref{table-mod-eff} we can conclude that our inexact proximal DCA is fairly efficient for solving the model  on random modular graphs.

\subsubsection{Real Data}
We compare the two methods on the real data set {\it animals}, {\it senate}, and {\it temperature}. We encountered numerical issue due to  matrix singularity when the NGL is applied for solving the {\it senate} and {\it temperature} data sets. We run through a sequence of parameters $\lambda$ used in Section~\ref{sec:4.1.2} which result in sparse graphs.
Table~\ref{table-animals-eff} compares the number of edges of the resulting graphs, the computational time, and the objective value of problem \eqref{cgl-mcp2} of the two methods. It can be seen that both DCA and NGL can solve the instances on the {\it animals} data within several seconds, and the objective values  are comparable.
For {\it senate} and {\it temperature} data, only DCA can return reasonable solutions.

In addition, we test on genetic real data sets {\it lymph} ($n=587$)  and {\it estrogen} ($n=692$) from \cite[Section~4.2]{li2010inexact}, and the model \eqref{cgl-mcp} can extract dependency relationships among the genes.
The results are also presented in Table~\ref{table-animals-eff}. Due to the relatively large dimensions, the computational time on genetic real data sets increased correspondingly compared to the {\it animals}, {\it senate}, and {\it temperature} data sets.

The numerical results in Tables~\ref{table-mod-eff} and \ref{table-animals-eff} show that our inexact proximal DCA is a fairly efficient and robust method for solving the \eqref{cgl-mcp} model.

\section{Conclusion}\label{sec-con}
In this paper, we have designed an inexact proximal DCA for solving the MCP penalized graphical model with Laplacian structural constraints \eqref{cgl-mcp}.
We also prove that any limit point of the sequence generated by
the inexact proximal DCA is a critical point of \eqref{cgl-mcp}. Each subproblem of the proximal DCA is solved by an efficient semismooth Newton method. Numerical experiments have demonstrated the effectiveness of the model \eqref{cgl-mcp} and the efficiency of the inexact proximal DCA, together with the semismooth Newton method, of solving the model. More generally, both the model and algorithm can be applied  directly to other non-convex penalties, such as the smoothly clipped absolute deviation (SCAD) function.

\section*{Funding}
The first author was supported by  the National Natural Science Foundation of China under grant number 12201617,
the second author was supported by the Academic Research Fund of the Ministry of Education of Singapore under grant number MOE2019-T3-1-010, and
the last author was supported by Hong Kong Research Grant Council under grant number 15304019.

\bibliographystyle{tfs}
\bibliography{GLbib}

\newpage
\appendix

\section{Proof of Theorem~\ref{thm:2.2}}\label{app:A}

\begin{lemma}\label{lem:A1}
Consider the following problem
\begin{equation}\label{cgl-l1a}
\min~~\{-\log\det\,(\mathcal{A}^*w + J) + \langle S,\mathcal{A}^*w\rangle + \delta(w\,|\,\mathbb{R}^{|\mathcal{E}|}_+) \}.
\end{equation}
{\rm (i)}
The level set \eqref{lev-set} of problem \eqref{cgl-l1a}
\begin{equation}\label{lev-set}
\{w\in\mathbb{R}^{|\mathcal{E}|}\,|\,-\log\det\,(\mathcal{A}^*w + J) + \langle S,\mathcal{A}^*w\rangle + \delta(w\,|\,\mathbb{R}^{|\mathcal{E}|}_+)\leq \alpha\}
\end{equation}
is closed for every $\alpha\in\mathbb{R}$. Namely, the essential objective function of problem \eqref{cgl-l1a}
\begin{equation*}
g(w) := -\log\det\, (\mathcal{A}^*w + J) + \langle S,\mathcal{A}^*w \rangle +  \delta(w\,|\,\mathbb{R}^{|\mathcal{E}|}_+),\,w\in\mathbb{R}^{|\mathcal{E}|}
\end{equation*}
is lower semi-continuous on $\mathbb{R}^{|\mathcal{E}|}$.
\\[5pt]
{\rm (ii)}
Suppose the condition $\{ w\in\mathbb{R}^{|\mathcal{E}|}_+\,|\,\langle S,\mathcal{A}^*w  \rangle \leq 0\} = \{{\bf 0}\}$ holds (it holds if the given matrix $S$ is positive definite). Then the level set \eqref{lev-set} of problem \eqref{cgl-l1a} is bounded for every $\alpha \in \mathbb{R}$ and the solution set of problem \eqref{cgl-l1a} is a non-empty bounded set.
\end{lemma}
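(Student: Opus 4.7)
My plan is to treat both parts via standard convex analysis, viewing $g$ as a proper convex function on $\mathbb{R}^{|\mathcal{E}|}$.

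For part (i), I would decompose $g$ into three summands: the composition $w\mapsto -\log\det(\mathcal{A}^*w+J)$ of the log-determinant (extended to $+\infty$ outside $\mathbb{S}^n_{++}$) with a continuous affine map; the continuous linear term $\langle S,\mathcal{A}^*w\rangle$; and the indicator $\delta(\cdot\,|\,\mathbb{R}^{|\mathcal{E}|}_+)$ of a closed convex cone. The first is lower semicontinuous because $-\log\det$ is a convex l.s.c.\ function on $\mathbb{S}^n$ and lower semicontinuity is preserved under composition with continuous maps; the second is continuous; and the third is the indicator of a closed set, hence l.s.c. Since a sum of proper l.s.c.\ functions is l.s.c., $g$ is l.s.c.\ on $\mathbb{R}^{|\mathcal{E}|}$, which is equivalent to the closedness of every sublevel set \eqref{lev-set}.

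For part (ii), my plan is to invoke the classical recession-function criterion: a proper closed convex function has all its sublevel sets bounded if and only if its recession function $g_\infty$ is strictly positive on every nonzero direction. Because the recession operation is additive, I would compute $g_\infty$ one summand at a time. The recession of the linear term is $\langle S,\mathcal{A}^*d\rangle$ and the recession of $\delta(\cdot\,|\,\mathbb{R}^{|\mathcal{E}|}_+)$ is itself. The main computation is the recession of $w\mapsto -\log\det(\mathcal{A}^*w+J)$: for any feasible $w^0$ and any $d\in\mathbb{R}^{|\mathcal{E}|}_+$, $\mathcal{A}^*d$ is a weighted Laplacian and hence positive semidefinite, so using the identity $\log\det(\mathcal{A}^*w^0+J+t\mathcal{A}^*d)=\log\det(\mathcal{A}^*w^0+J)+\sum_i\log(1+t\mu_i)$, with $\mu_i\ge 0$ the eigenvalues of $(\mathcal{A}^*w^0+J)^{-1/2}\mathcal{A}^*d\,(\mathcal{A}^*w^0+J)^{-1/2}$, one sees this summand contributes $0$ to the recession; for $d$ outside $\mathbb{R}^{|\mathcal{E}|}_+$ the indicator already yields $+\infty$. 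Thus
\begin{equation*}
g_\infty(d)=\langle S,\mathcal{A}^*d\rangle+\delta(d\,|\,\mathbb{R}^{|\mathcal{E}|}_+),
\end{equation*}
and strict positivity of $g_\infty$ on $d\neq 0$ is exactly the stated assumption. When $S\in\mathbb{S}^n_{++}$, this assumption holds because for any $d\in\mathbb{R}^{|\mathcal{E}|}_+\setminus\{\mathbf{0}\}$, $\mathcal{A}^*d$ is a nonzero PSD matrix (any positive edge weight $d_{(ij)}$ forces $(\mathcal{A}^*d)_{ii}>0$), so $\langle S,\mathcal{A}^*d\rangle=\mathrm{tr}(S^{1/2}\mathcal{A}^*d\,S^{1/2})>0$.

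Combining part (i) with the boundedness established above, each sublevel set of $g$ is compact. Properness of $g$ is immediate (any strictly positive $w$ on a connected graph gives $\mathcal{A}^*w+J\succ 0$ and hence $g(w)<\infty$), so the infimum is attained and the solution set is non-empty; being itself a sublevel set, it is also bounded. The main technical step in this plan is the recession-function computation for the $-\log\det$ term; everything else is routine convex-analytic bookkeeping.
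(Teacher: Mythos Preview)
Your proposal is correct, and for part~(ii) it follows essentially the same recession-function route as the paper: the paper also computes $(g0^+)(w)=\langle S,\mathcal{A}^*w\rangle$ for $w\in\mathbb{R}^{|\mathcal{E}|}_+$ and $+\infty$ otherwise, then invokes Rockafellar's Theorems~8.4, 8.7 and 27.1. One small caveat: your phrase ``the recession operation is additive'' needs the qualification that the relative interiors of the three effective domains intersect (which they do here, at any strictly positive $w$); without that, only the inequality $(f_1+f_2)0^+\ge f_10^++f_20^+$ is guaranteed.

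For part~(i) your argument genuinely differs from the paper's. You decompose $g$ as a sum of three l.s.c.\ functions (using that $-\log\det$, extended by $+\infty$ off $\mathbb{S}^n_{++}$, is closed proper convex and that l.s.c.\ survives continuous-affine precomposition and finite sums). The paper instead gives a bare-hands sequential argument: taking $w_k\to w$ and $\alpha_k\to\alpha$ with $g(w_k)\le\alpha_k$, it shows by contradiction that finiteness of $\alpha$ forces $\nu:=\lambda_{\min}(\mathcal{A}^*w+J)>0$, and then uses ordinary continuity of $-\log\det$ on $\{\Theta\succeq\nu I\}$. Your route is shorter and more modular; the paper's is self-contained in that it does not cite the closedness of $-\log\det$ as a black box but effectively reproves the relevant piece of it.
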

\begin{proof}
{\rm (i)} It suffices to prove the condition that
\begin{equation}\label{proof:1}
\alpha \geq -\log\det\,(\mathcal{A}^*w + J) + \langle S,\mathcal{A}^*w\rangle + \delta(w\,|\,\mathbb{R}^{|\mathcal{E}|}_+)
\end{equation}
whenever $\alpha = \lim \alpha_k$ and $w = \lim w_k$ for sequences $\{\alpha_k\}$ and $\{w_k\}$ such that $\alpha_k \geq -\log\det\,(\mathcal{A}^*w_k + J) + \langle S,\mathcal{A}^*w_k\rangle + \delta(w_k\,|\,\mathbb{R}^{|\mathcal{E}|}_+)$ for every $k$.
If $\alpha = +\infty$, \eqref{proof:1} holds automatically. Next we focus on the case where $\alpha$ is finite. We claim that $\nu := \lambda_{\rm min}(\mathcal{A}^*w + J) >0$ if $\alpha$ is finite.

Proof of the claim: If $\lambda_{\rm min}(\mathcal{A}^*w + J) = 0$, then for every $i$, there exists $k_i$ such that
$\lambda_{\rm min}(\mathcal{A}^*w_{k_i} + J) < \frac{1}{i}$ and
$\lambda_{\rm max}(\mathcal{A}^*w_{k_i} + J) < \lambda_{\rm max}(\mathcal{A}^*w + J) + 1$. Then, $-\log\det\,(\mathcal{A}^*w_{k_i} + J) > -(n-1) \log\,(\lambda_{\rm max}(\mathcal{A}^*w + J) + 1) + \log\,i$. By letting $i\to+\infty$, we obtain that $\alpha = +\infty$, which is contradictory to the finiteness of $\alpha$. Therefore, the claim is proved.

By the continuity of $-\log\det(\cdot)$ on $\{\Theta\,|\,\Theta \succeq \nu I\}$ and the closedness of the set $\mathbb{R}^{|\mathcal{E}|}_+$, we can obtain \eqref{proof:1} by letting $k \to +\infty$. The lower semi-continuity of $g$ follows from \citep[Theorem~7.1]{rockafellar1996convex}.

{\rm (ii)}
By \citep[Theorem~8.5]{rockafellar1996convex},  the recession function $g0^+$ of $g$ is given as follows: for $w\in\mathbb{R}^{|\mathcal{E}|}_+$,
\begin{equation*}
\begin{array}{cl}
(g0^+)(w)&=\displaystyle\lim\limits_{t\to+\infty} \frac{g({\bf 1} + t w) - g({\bf 1})}{t}\\
&=\displaystyle \lim\limits_{t\to+\infty} \frac{-\log\det\,(\mathcal{A}^*{\bf 1} + J + t\mathcal{A}^*w) + \log\det\,(\mathcal{A}^*{\bf 1} + J)}{t} + \langle S,\mathcal{A}^*w\rangle.
\end{array}
\end{equation*}
Since $\mathcal{A}^*w\in\mathbb{S}^n_+$, it is easy to obtain that
\begin{equation*}
\lim\limits_{t\to+\infty} \frac{-\log\det\,(\mathcal{A}^*{\bf 1} + J + t\mathcal{A}^*w) + \log\det\,(\mathcal{A}^*{\bf 1} + J)}{t}=0.
\end{equation*}
Therefore, $(g0^+)(w)=\langle S,\mathcal{A}^*w\rangle$ if  $w\in\mathbb{R}^{|\mathcal{E}|}_+$; $(g0^+)(w)=+\infty$ otherwise. The recession cone of $g$ is given by
$\{w\,|\,(g0^+)(w)\leq 0\} = \{w\in\mathbb{R}^{|\mathcal{E}|}_+\,|\,\langle S,\mathcal{A}^*w\rangle\leq 0\} $. Then (ii) follows from \citep[Theorem~8.4, Theorem~8.7, \& Theorem~27.1]{rockafellar1996convex} and the proof is completed.
\end{proof}

\begin{proof}[Proof of Theorem~\ref{thm:2.2}]
The boundedness of the level set follows from Lemma~\ref{lem:A1} (ii) and the following inclusion
\begin{equation*}
\begin{array}{cl}
\{w\,|\,f(w)\leq \alpha\} &= \{ w\in\mathbb{R}^{|\mathcal{E}|}_+\,|\, -\log\det\,(\mathcal{A}^*w + J) + \langle S,\mathcal{A}^*w\rangle + P(\mathcal{A}^*w) \leq \alpha \}\\
&\subseteq \{ w\in\mathbb{R}^{|\mathcal{E}|}_+\,|\, -\log\det\,(\mathcal{A}^*w + J) + \langle S,\mathcal{A}^*w\rangle  \leq \alpha \}.
\end{array}
\end{equation*}
The rest results follow from \citep[Theorem~1.9]{rockafellar2009variational} and that Algorithm~\ref{alg-mm} is a descent algorithm. The proof is completed.
\end{proof}

\section{Proof of Theorem~\ref{thm:2.6}}\label{app:A2}

\begin{proof}[Proof of Theorem~\ref{thm:2.6}]
We first prove (I). Suppose that there exists $k_0$ such that $w^{k_0+1} = w^{k_0}$,  implying $\delta^{k_0}=0$, it follows from the optimality condition of \eqref{subprob} that $w^{k_0}$ is a critical point of \eqref{cgl-mcp3}. Now we suppose that $\|w^{k+1} - w^k\|>0$ for all $k\geq 0$.

We show in Theorem~\ref{thm:2.2} that $\lim\limits_{k\to \infty} \|w^{k+1} - w^k\|=0$. If condition (1) holds, then the convergence of $\{w^k\}$ follows immediately from \cite[Proposition~8.3.10]{facchinei2003finite}.

Now we assume condition (2) holds. It follows from \eqref{major-f} and \eqref{subprob-eq} that
$
0\in \partial g(w^{k+1}) -\mathcal{A} \nabla h(\mathcal{A}^*w^k) + \sigma_k(w^{k+1} - w^k) + \sigma_k\mathcal{A}\mathcal{A}^*(w^{k+1} - w^k) + \delta^k.
$
Namely,
\begin{align*}
\xi^{k+1} &:=\mathcal{A} \nabla h(\mathcal{A}^*w^k) - \mathcal{A} \nabla h(\mathcal{A}^*w^{k+1}) - \sigma_k(w^{k+1} - w^k) - \sigma_k\mathcal{A}\mathcal{A}^*(w^{k+1} - w^k) - \delta^k \\
& \in \partial g(w^{k+1}) - \mathcal{A} \nabla h(\mathcal{A}^*w^{k+1}) \subseteq \partial f(w^{k+1}).
\end{align*}
Moreover, we can see from the definition \eqref{grad-h} that $\nabla h(\cdot)$ is globally Lipschitz continuous. Therefore, there  exists $K>0$ such that $\|\xi^{k+1}\|\leq K \|w^{k+1} - w^k\|$ for all $k$. That is,
\begin{equation}\label{pf1}
  {\rm dist}(0,\partial f(w^{k+1})) \leq K \|w^{k+1} - w^k\| \mbox{  for all } k.
\end{equation}
Since $\{w^k\}$ is bounded under condition (2), there exits a limit point $w^{\infty} \in B^{\infty}$ of $\{w^k\}$. By Lemma~\ref{lemma2.1}, we have that $\lim\limits_{k\to \infty}  f(w^k)=f(w^{\infty})$. Without loss of generality, we assume that $f(w^{\infty})=0$. Since $f$ has the KL property at $w^{\infty}$, there exit $\delta> 0$, $\alpha \in (0,+\infty]$, and $\phi \in \Phi_{\alpha}$ such that $\phi'(f(w)) {\rm dist}(0,\partial f(w)) \geq 1$ for all $w$ such that $\|w - w^{\infty}\|\leq \delta $ and $0< f(w) < \alpha$.

By Lemma~\ref{lemma2.1} and the fact that $\sigma_k \geq \sigma_{\infty}$, we obtain that
\begin{equation}\label{pf2}
f(w^{k+1}) \leq f(w^k) - \frac{\sigma_{\infty}}{4} \|w^{k+1} - w^k\|^2
\leq f(w^k) - \frac{\sigma_{\infty}}{4}\frac{\|w^{k+1} - w^k\|^2}{\|w^{k} - w^{k-1}\|}\|w^{k} - w^{k-1}\|.
\end{equation}
By \eqref{pf1}, we further have that
$$
f(w^{k+1}) \leq f(w^k) - \frac{\sigma_{\infty}}{4K}\frac{\|w^{k+1} - w^k\|^2}{\|w^{k} - w^{k-1}\|} {\rm dist}(0,\partial f(w^{k})).
$$

Let $r\geq s \geq 1$ be some integers and assume that the points $w^s,\dots,w^r$ belong to $B(w^{\infty},\delta):=\{w \mid \|w - w^{\infty}\|\leq \delta\}$ with $f(w^s) < \alpha$. From the monotonicity and concavity of $\phi$, we obtain that
$$
\phi(f(w^{k+1})) \leq \phi(f(w^{k})) - \phi'(f(w^{k}))\frac{\sigma_{\infty}}{4K}\frac{\|w^{k+1} - w^k\|^2}{\|w^{k} - w^{k-1}\|} {\rm dist}(0,\partial f(w^{k})),
$$
thus by the KL property, for $k\in\{s,\dots,r\}$,
\begin{equation}\label{pf4}
\phi(f(w^{k+1})) \leq \phi(f(w^{k})) - \frac{\sigma_{\infty}}{4K}\frac{\|w^{k+1} - w^k\|^2}{\|w^{k} - w^{k-1}\|}.
\end{equation}
By the inequality $2(a-b) \geq \frac{a^2-b^2}{a},\,\forall\,a>0,\,b\in\mathbb{R}$ and \eqref{pf4}, we have for  $k\in\{s,\dots,r\}$,
\begin{align*}
\|w^{k} - w^{k-1}\| & = \frac{\|w^{k+1} - w^k\|^2}{\|w^{k} - w^{k-1}\|} + \frac{\|w^{k} - w^{k-1}\|^2 - \|w^{k+1} - w^k\|^2}{\|w^{k} - w^{k-1}\|}\\
&\leq \frac{\|w^{k+1} - w^k\|^2}{\|w^{k} - w^{k-1}\|}  + 2( \|w^{k} - w^{k-1}\| - \|w^{k+1} - w^k\|)\\
&\leq \frac{4K}{\sigma_{\infty}} (\phi(f(w^{k})) - \phi(f(w^{k+1}))) + 2( \|w^{k} - w^{k-1}\| - \|w^{k+1} - w^k\|)
\end{align*}
Hence, by summation
\begin{equation}\label{pf9}
\sum_{k=s}^{r}\|w^{k} - w^{k-1}\| \leq
\frac{4K}{\sigma_{\infty}} (\phi(f(w^{s})) - \phi(f(w^{r+1}))) + 2( \|w^{s} - w^{s-1}\| - \|w^{r+1} - w^r\|).
\end{equation}

Next we show the sequence remains in the neighborhood and converges. Since $\phi$ is continuous and $f(w^k) \downarrow 0$, we can find a sufficiently large $N$ such that
\begin{align}
  \|w^N - w^{\infty}\| & \leq \frac{\delta}{4} \label{pf5}\\
   \frac{4K}{\sigma_{\infty}} \phi(f(w^{N}))& \leq \frac{\delta}{4} \label{pf6}\\
  \sqrt{\frac{4}{\sigma_{\infty}} f(w^{N})} & < \min\left(\frac{\delta}{4},\sqrt{\frac{4\alpha}{\sigma_{\infty}}}\right)\label{pf7}
\end{align}
It follows from \eqref{pf2} that
\begin{equation}\label{pf8}
\|w^{k+1} - w^k\| \leq
\sqrt{ \frac{4}{\sigma_{\infty}} (f(w^k) - f(w^{k+1})) }
\leq \sqrt{ \frac{4}{\sigma_{\infty}} f(w^k) }
<\frac{\delta}{4},\quad \forall\,k\geq N.
\end{equation}
Let us prove that $w^r\in B(w^{\infty},\delta)$ for $r\geq N$. We proceed by induction on $r$. By \eqref{pf5}, $w^N\in B(w^{\infty},\delta)$. By \eqref{pf7}, $f(W^N)< \alpha$. Suppose that $r\geq N+1$, and $w^N,\dots,w^{r-1} \in B(w^{\infty},\delta)$, then
\begin{align*}
  \|w^r - w^{\infty}\| & \leq \|w^r - w^{r-1}\| + \|w^{r-1} - w^{N}\| + \|w^{N} - w^{\infty}\| \\
  & < \frac{\delta}{4} + \sum_{k=N+1}^{r-1}\|w^{k} - w^{k-1}\|  + \frac{\delta}{4} \\
  & \leq \frac{\delta}{2} + \frac{4K}{\sigma_{\infty}} (\phi(f(w^{N+1})) - \phi(f(w^{r}))) + 2( \|w^{N+1} - w^{N}\| - \|w^{r} - w^{r-1}\|)\\
  & \leq \frac{\delta}{2} + \frac{4K}{\sigma_{\infty}}  \phi(f(w^{N+1}))  + 2 \|w^{N+1} - w^{N}\| \leq \delta,
\end{align*}
where the second inequality follows from \eqref{pf5} and \eqref{pf8}; the third inequality follows from \eqref{pf9}; and the last inequality follows from \eqref{pf6} and \eqref{pf8}. Hence $w^N,\dots,w^{r} \in B(w^{\infty},\delta)$ and the induction proof is complete. Therefore, $w^r\in B(w^{\infty},\delta)$ for $r\geq N$. Using \eqref{pf9} again, we obtain that the series $\sum \|w^{k} - w^{k-1}\|$ converges, hence $w^k$ also converges by Cauchy's criterion.

The second part (II) is proved as in \cite[Theorem~2]{attouch2009convergence}. Here $\phi$ can be chosen of the form $\phi(s)=cs^{1-\theta}$ with $c>0$ and $\theta\in [0,1)$. Then \eqref{pf9}, the KL property, and \eqref{pf1} yield a similar result as in \cite[(11)]{attouch2009convergence}, which therefore leads to the same estimates.
\end{proof}

\section{ADMM for Solving \eqref{cgl-l1}}\label{app:B}
In this part, we briefly describe the alternating direction method of multipliers (ADMM) for solving \eqref{cgl-l1} and refer the readers to  \citep{chen2017efficient,fazel2013hankel} for its convergence properties. First we reformulate the model \eqref{cgl-l1} as follows:
\begin{equation}\label{cgl-l1-2}
  \begin{array}{cl}
  \min\limits_{\Theta,w,x} & -\log\det\, (\Theta + J) + \langle K,\Theta \rangle \\[6pt]
  {\rm s.t.} & \Theta = \mathcal{A}^*x,\\
  & w-x=0,\\
  & \Theta \in \mathbb{S}^n,\,\,w \in \mathbb{R}^{|\mathcal{E}|}_+,\,\,x \in \mathbb{R}^{|\mathcal{E}|},
  \end{array}
\end{equation}

where $K:=S+\lambda I$.
It is easy to derive the following dual problem of \eqref{cgl-l1-2}:
\begin{equation}\label{cgl-l1-dual}
  \begin{array}{cl}
  \max & \log\det\, (Y + K ) - \langle  J, Y+K \rangle + n \\[5pt]
  {\rm s.t.} &  \mathcal{A}Y + \zeta = 0,\\[3pt]
  & \zeta \in \mathbb{R}^{|\mathcal{E}|}_+, \;\; Y \in \mathbb{S}^n.
  \end{array}
\end{equation}
The Karush-Kuhn-Tucker (KKT) optimality conditions associated with \eqref{cgl-l1-2} and \eqref{cgl-l1-dual} are given as follows:
\begin{equation}\label{kkt}
\begin{array}{l}
\Theta - \mathcal{A}^* x = 0, \;\; w-x = 0, \;\; w \in \mathbb{R}^{|\mathcal{E}|}_+, \\[3pt]
\mathcal{A}Y + \zeta = 0,\;\; \zeta \in \mathbb{R}^{|\mathcal{E}|}_+,\\[3pt]
(\Theta + J)(Y+K) = I,\quad \Theta+J\in\mathbb{S}^n_{++}, \quad \langle w,\zeta\rangle = 0.
\end{array}
\end{equation}
The iteration scheme of our ADMM for solving \eqref{cgl-l1-2} can be described as follows: given $\tau\in(0,(1+\sqrt{5})/2)$, and an initial point $(x^0,\Theta^0,w^0,Y^0,\zeta^0)$, the $(k+1)$-th iteration is given by
\begin{equation}\label{method-admm}
\left\{
\begin{array}{l}
x^{k+1} = (I+\mathcal{A}\mathcal{A}^*)^{-1}[\mathcal{A}(\Theta^k + {\sigma_k}^{-1}Y^k) + w^k + {\sigma_k}^{-1}\zeta^k],\\[6pt]
\Theta^{k+1} = {\rm Prox}^{\sigma_k}_{\ell}(J + \mathcal{A}^*x^{k+1} -{\sigma_k}^{-1} Y^k - {\sigma_k}^{-1} K) - J,\\[6pt]
w^{k+1} = \Pi_+(x^{k+1} - {\sigma_k}^{-1}\zeta^k),\\[6pt]
Y^{k+1} = Y^k + \tau\sigma_k (\Theta^{k+1} - \mathcal{A}^*x^{k+1}),\,\zeta^{k+1} = \zeta^k + \tau\sigma_k(w^{k+1} - x^{k+1}).
\end{array}\right.
\end{equation}
We measure the optimality of an estimated primal-dual solution obtained from ADMM by the {relative KKT residual} $\max\{\eta_p,\eta_d,\eta_g\}$, where
\begin{equation*}
\begin{array}{ll}
{\rm pobj} = -\log\det\,( \mathcal{A}^*w + J) + \langle K,\mathcal{A}^*w \rangle,
& {\rm dobj} = \log\det\,(Y+K) - \langle J,Y+K\rangle + n,\\[3pt]
\eta_p = \max\left\{ \frac{  \max\{ \|\Theta - \mathcal{A}^*x\|, \|w-x\| \} }{1+\|x\|}, \frac{\|\Pi_+(-w)\|}{1+\|w\|} \right\},
& \eta_d =  \frac{\max \{ \|\mathcal{A}Y + \zeta\|,  \|\Pi_+(-\zeta)\|\}}{1+\|\zeta\|},\\[3pt]
\eta_g = |{\rm pobj} - {\rm dobj}|/(1 + |{\rm pobj}| + |{\rm dobj}|).&
\end{array}
\end{equation*}
The ADMM is terminated if $\max\{\eta_p,\eta_d,\eta_g\} < \varepsilon$, for a given tolerance $\varepsilon > 0$.

Next, we discuss efficient techniques to solve the $|\mathcal{E}| \times |\mathcal{E}|$ linear system in the first step of the above iteration: $(I+\mathcal{A}\mathcal{A}^*)x=b$, for any given $b\in \mathbb{R}^{|\mathcal{E}|}$. Obviously, the linear system can be solved inexactly by an iterative method such as the conjugate gradient method. However, when the linear system  is of moderate dimension, say $|\mathcal{E}| < 5000$, it is generally more efficient to solve it by a direct method via a pre-computed Cholesky decomposition. Since the direct method requires the explicit matrix form of the linear map $\mathcal{A}\mathcal{A}^*$, we derive its matrix representation in the following proposition.

\begin{proposition}\label{prop-AAt}
Let $\mathcal{G}=(V,\mathcal{E})$ be a given graph with $|V|=n$, and  $B\in\mathbb{R}^{n\times |\mathcal{E}|}$ be the node-arc incidence matrix of $\mathcal{G}$. We define a linear map $\mathcal{A}^*:\,\mathbb{R}^{|\mathcal{E}|} \to \mathbb{S}^n$ as
$
\mathcal{A}^*w = B{\rm Diag}(w)B^T,\,w\in\mathbb{R}^{|\mathcal{E}|}.
$
Then the matrix representation of $\mathcal{A}\mathcal{A}^*:\,\mathbb{R}^{|\mathcal{E}|} \to\mathbb{R}^{|\mathcal{E}|} $ is $2I + |B|^T|B|$.
\end{proposition}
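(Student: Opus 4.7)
The plan is to work directly from the definitions and reduce the claim to a combinatorial count of shared endpoints between pairs of edges. First I would write out $\mathcal{A}\mathcal{A}^\ast$ explicitly: since $\mathcal{A}^\ast w = B\,\mathrm{Diag}(w)\,B^T$ and $\mathcal{A}X = \mathrm{diag}(B^T X B)$, composition gives
\begin{equation*}
\mathcal{A}\mathcal{A}^\ast w \;=\; \mathrm{diag}\!\bigl(B^T B\,\mathrm{Diag}(w)\,B^T B\bigr) \;=\; \mathrm{diag}\!\bigl(M\,\mathrm{Diag}(w)\,M\bigr),
\end{equation*}
where $M := B^T B \in \mathbb{S}^{|\mathcal{E}|}$. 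Because $M$ is symmetric, a quick index computation shows that for each edge $e$, $[\mathcal{A}\mathcal{A}^\ast w]_e = \sum_{e'} M_{ee'}^{\,2}\, w_{e'}$, so the matrix representation of $\mathcal{A}\mathcal{A}^\ast$ is the entrywise square $M \odot M$.

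Next I would identify the entries of $M_{ee'}$ combinatorially. With $B_{(ij)} = e_i - e_j$, we have $M_{ee'} = (e_i-e_j)^T(e_k-e_l)$ for $e=(i,j)$ and $e'=(k,l)$. A case check gives $M_{ee} = 2$, $M_{ee'} = \pm 1$ when $e \neq e'$ share exactly one vertex (with sign depending on orientation agreement), and $M_{ee'} = 0$ when the edges are disjoint. Squaring removes the sign ambiguity, yielding
\begin{equation*}
(M\odot M)_{ee'} \;=\; \begin{cases} 4, & e = e', \\ 1, & e \neq e' \text{ share exactly one vertex}, \\ 0, & \text{otherwise}. \end{cases}
\end{equation*}

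Then I would do the analogous count for $|B|^T|B|$. Because $|B|_{(ij)} = e_i + e_j$, the entry $(|B|^T|B|)_{ee'} = (e_i+e_j)^T(e_k+e_l)$ is just the number of common endpoints of $e$ and $e'$; this equals $2$ on the diagonal, $1$ when two distinct edges share one vertex, and $0$ otherwise. Adding $2I$ shifts the diagonal to $4$ while leaving the off-diagonal entries untouched, giving exactly the same three-case table as $M \odot M$. Matching the entries case by case yields $\mathcal{A}\mathcal{A}^\ast = 2I + |B|^T|B|$, which completes the proof.

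No step here is particularly hard; the only mild subtlety is remembering that the orientation signs in $B$ cancel upon squaring, which is precisely why the unsigned adjacency structure $|B|^T|B|$ is what appears in the final formula. I would present the argument as (i) reduction to the entrywise square of $B^T B$, (ii) the two endpoint-counting calculations, and (iii) a side-by-side comparison of the resulting tables.
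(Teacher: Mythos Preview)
Your argument is correct and follows essentially the same route as the paper: both proofs reduce to showing that the matrix of $\mathcal{A}\mathcal{A}^\ast$ is the entrywise square $(B^TB)\odot(B^TB)$ and then identify this with $2I+|B|^T|B|$. The only difference is cosmetic: the paper first splits $B^TB = 2I + C$ and uses the identity $C\odot C = |C|$ for a $\{0,\pm 1\}$-valued matrix, whereas you compute both sides entry by entry via the endpoint-counting interpretation; your version is slightly more direct but the content is the same.
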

\begin{proof}
By the property of incidence matrices, we know that the diagonal entries of $B^TB$ are $2$ and the off-diagonal entries of $B^TB$ are $0$ or $\pm 1$. Thus we can split $B^TB$ into two parts: $B^TB = 2I + C$. Note that $ C:= B^TB-2I$ has all its diagonal entries equal to $0$ and the off-diagonal entries are either $0$ or $\pm 1$.
For any $w\in\mathbb{R}^{|\mathcal{E}|}$, we have that
$$
\mathcal{A}\mathcal{A}^*w = {\rm diag}( B^T B{\rm Diag}(w)B^T B)
= 4w +{\rm diag}(- 2C {\rm Diag}(w) - 2{\rm Diag}(w)C + C{\rm Diag}(w)C). $$
It follows from simple computations that
${\rm diag}(C{\rm Diag}(w)C) = (C\odot C)w$, where $\odot$ denotes elementwise product.
Together with the fact that ${\rm diag}(C) = 0$, we have
$ \mathcal{A}\mathcal{A}^*w = 4w + (C\odot C) w$. By noting the properties of $B$ and $C$, we can deduce that  $ C\odot C = |C| = |B^TB| - 2I = |B|^T|B| - 2I$, where $|\cdot|$ means taking elementwise absolute value. Therefore, $ \mathcal{A}\mathcal{A}^*w = (2I + |B|^T|B|) w,\,\forall\,w$.
The proof is completed.
\end{proof}
By Proposition~\ref{prop-AAt}, we know that the linear system $(I+\mathcal{A}\mathcal{A}^*)x=b$ becomes
\begin{equation}\label{eqn-2}
(3I + |B|^T|B|)x=b.
\end{equation}
In the case where the number of edges $|\mathcal{E}|$ is moderate (say $|\mathcal{E}| < 5000$), we can solve the equation \eqref{eqn-2} exactly
by computing the Cholesky decomposition of the sparse  matrix $3I + |B|^T|B|$. The sparse Cholesky decomposition
will merely be performed once at the beginning of the ADMM. With the
pre-computed Cholesky decomposition, the solution of \eqref{eqn-2}
can be computed via solving of two  triangular systems of linear equations with $O(|\mathcal{E}|^2)$ operations. In it is the case that $|\mathcal{E}|$  is large but the number of nodes $n$ is moderate (say $n < 5000$), we can use the Sherman-Morrison-Woodbury formula \citep[(2.1.3)]{golub2013matrix} to get:
\begin{equation*}
(3I + |B|^T|B|)^{-1} = \frac{1}{3}\big(I - |B|^T (3I + |B||B|^T)^{-1}|B|\big).
\end{equation*}
Therefore, to solve \eqref{eqn-2}, one only needs to solve an $n\times n$ linear system of the form $(3I + |B||B|^T)\tilde{x} = \tilde{b}$.
Moreover, it is easy to see that the coefficient matrix has the same sparsity pattern as the Laplacian matrix of the graph defined by $B$, which is likely to be sparse for our problem. For the case when both $|\mathcal{E}|$ and $n$ are too  large to perform efficient Cholesky decompositions, we have to resort to an iterative solver such as the conjugate gradient method  to solve \eqref{eqn-2}. Each iteration of the conjugate gradient method requires the multiplication of $3I + |B|^T|B|$ by a vector in $\mathbb{R}^{|\mathcal{E}|}$. By taking advantage of the sparsity of $B$, each matrix-vector multiplication requires $O(|\mathcal{E}|)$ operations.

\section{Additional Numerical Results}\label{app:C}
This section presents additional numerical results to complement those in Section~\ref{sec:syn-graphs}.
We test on  grid graph, $\mathcal{G}_{\rm grid}^{(100)}$ and random modular graph, $\mathcal{G}_M^{(100,0.05,0.3)}$. We set the sample size $k=5000n$ and the results are the average over 10 simulations.
Fig.~\ref{fig-grid1}---\ref{fig-grid4} plot the number of edges, F1 score, and recovery error with respect to a sequence of $\lambda$ with different connectivity constraints on $\mathcal{G}_{\rm grid}^{(100)}$.
Fig.~\ref{fig-mod1}---\ref{fig-mod4} plot the number of edges, F1 score, and recovery error with respect to {a sequence of} $\lambda$ with different connectivity constraints on $\mathcal{G}_M^{(100,0.05,0.3)}$.

\begin{figure}[!h]
  \centering
  \includegraphics[width=0.9\textwidth]{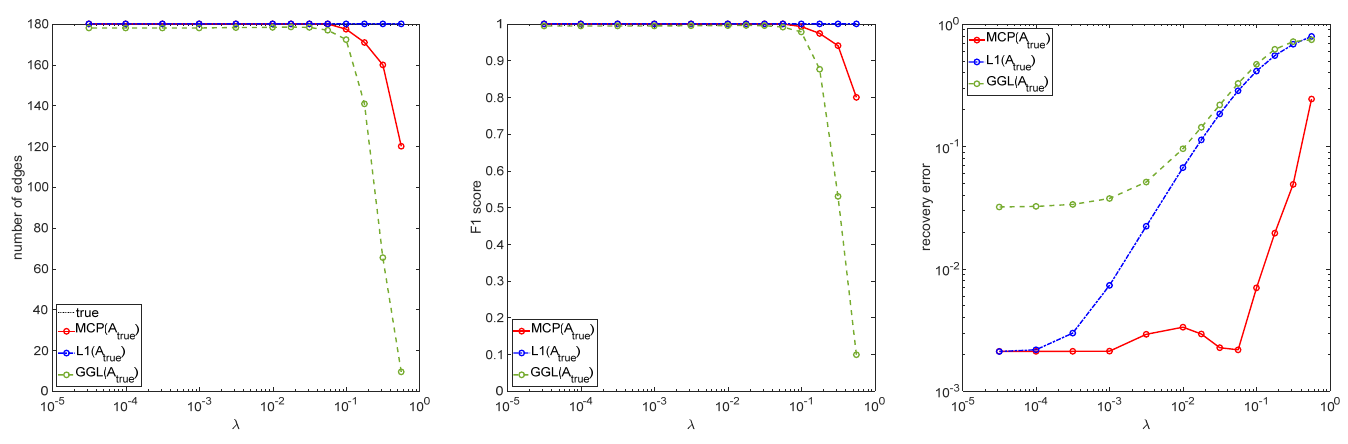}
  \caption{Results for grid graph, $\mathcal{G}_{\rm grid}^{(100)}$.  Scenario: the true connectivity matrix  $A = A_{\rm true}$ is used.}\label{fig-grid1}
\end{figure}
\begin{figure}[!h]
  \centering
  \includegraphics[width=0.9\textwidth]{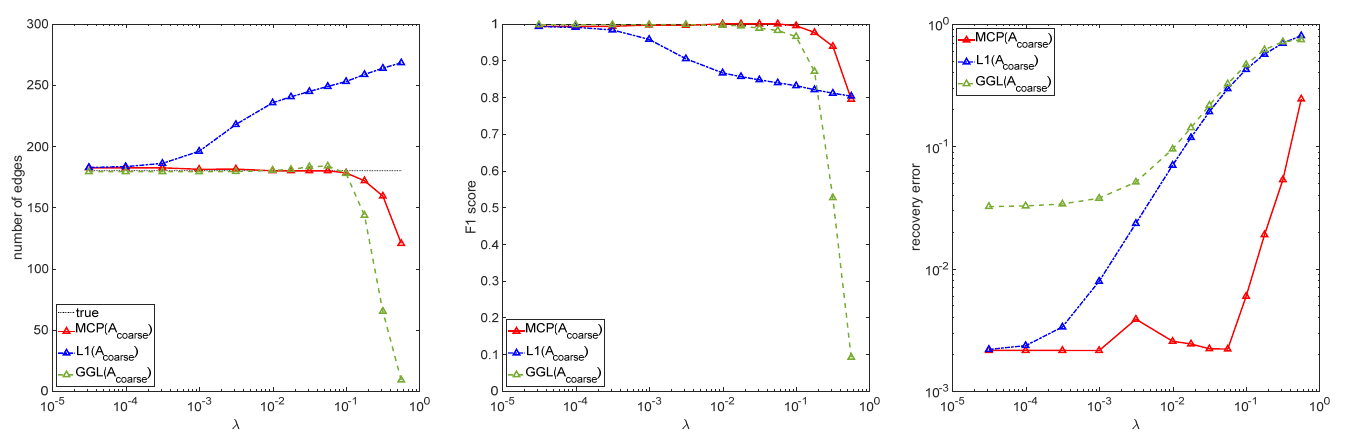}
  \caption{Results for  grid graph, $\mathcal{G}_{\rm grid}^{(100)}$. Scenario: we have a coarse estimation of the  true sparsity pattern and $A=A_{\rm coarse}$.}\label{fig-grid2}
\end{figure}
\begin{figure}[!h]
  \centering
  \includegraphics[width=0.9\textwidth]{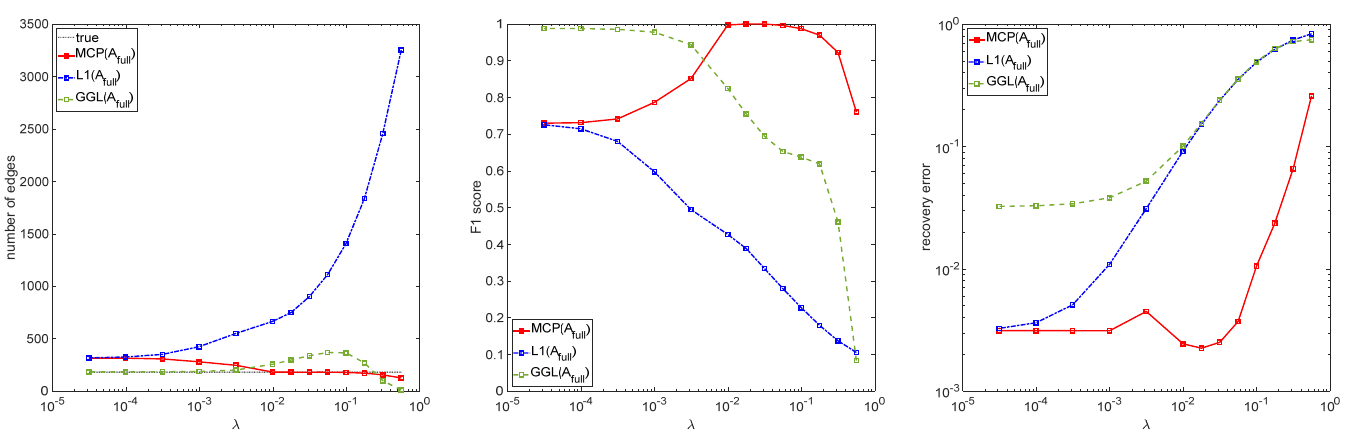}
  \caption{Results for  grid graph, $\mathcal{G}_{\rm grid}^{(100)}$. Scenario: the true sparsity pattern is unknown and  $A=A_{\rm full}$ is the input.}\label{fig-grid3}
\end{figure}
\begin{figure}[!h]
  \centering
  \includegraphics[width=0.9\textwidth]{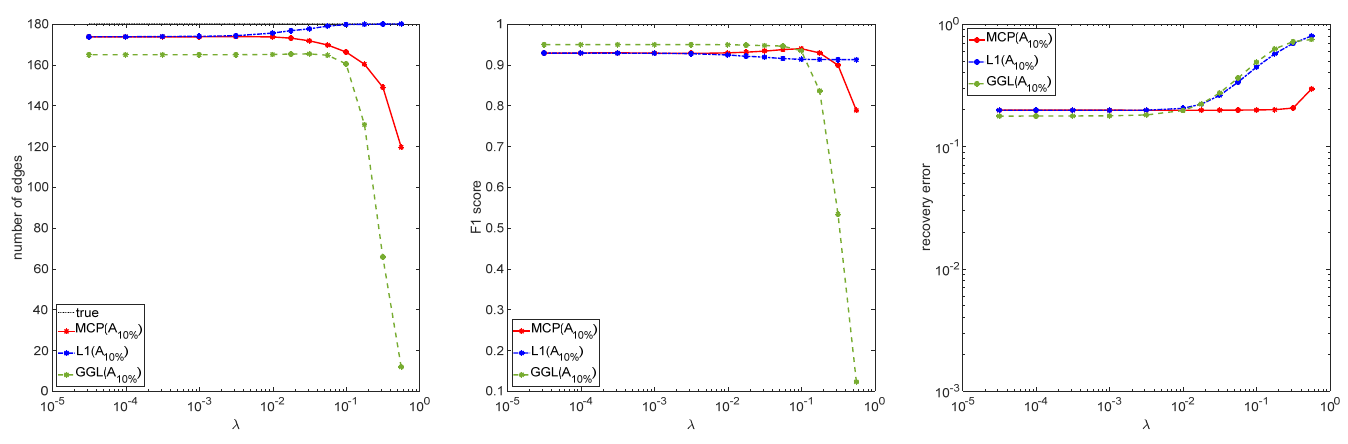}
  \caption{Results  grid graph, $\mathcal{G}_{\rm grid}^{(100)}$. Scenario: we have a rough estimation of the true sparsity pattern and $A = A_{\rm 10\%}$ is not exactly accurate.}\label{fig-grid4}
\end{figure}
\begin{figure}[!h]
  \centering
  \includegraphics[width=0.9\textwidth]{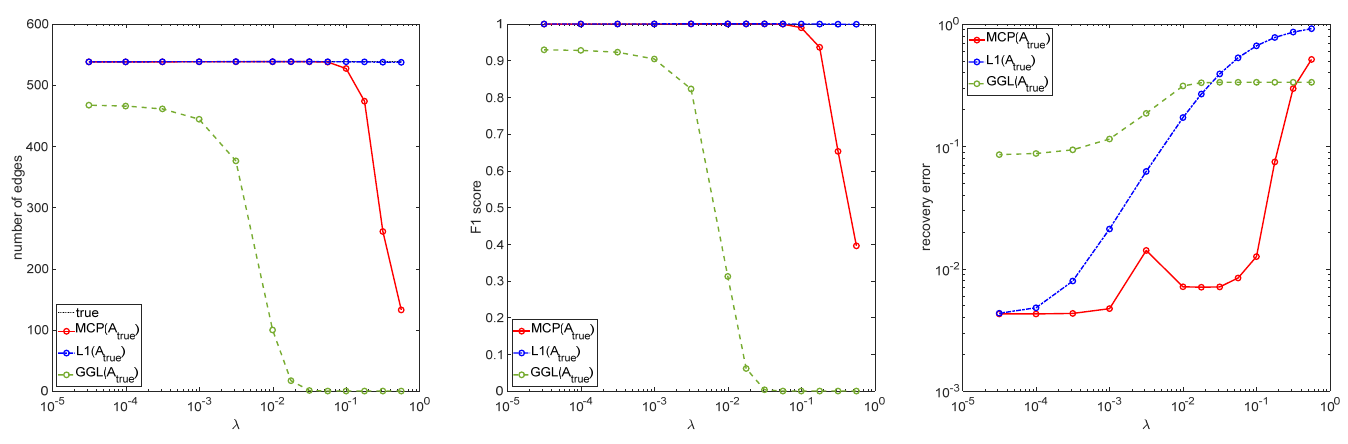}
  \caption{Results for  random modular graph, $\mathcal{G}_M^{(100,0.05,0.3)}$.
  Scenario: the true connectivity matrix  $A = A_{\rm true}$ is used.}\label{fig-mod1}
\end{figure}
\begin{figure}[!h]
  \centering
  \includegraphics[width=0.9\textwidth]{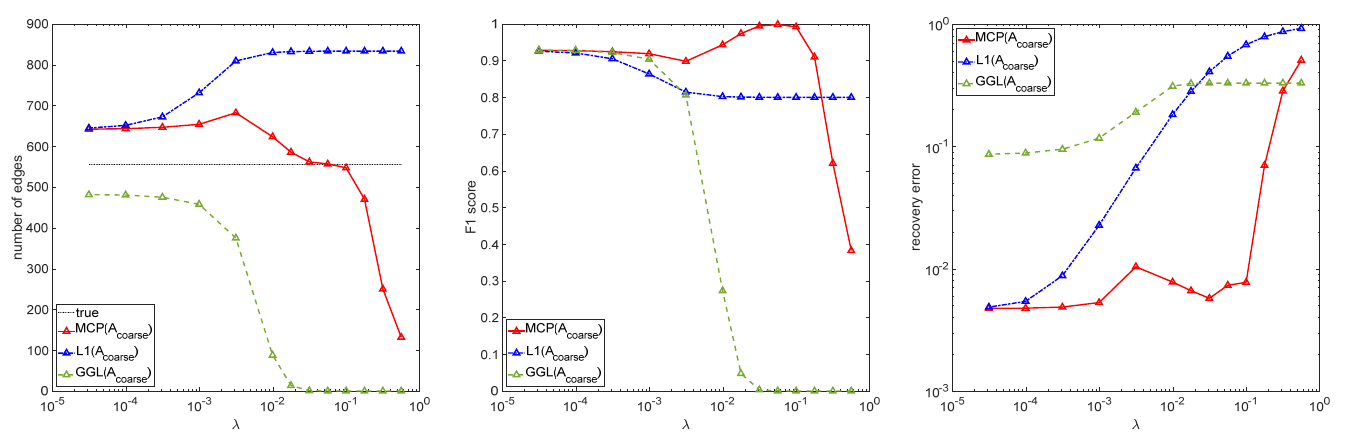}
  \caption{Results for  random modular graph, $\mathcal{G}_M^{(100,0.05,0.3)}$.
  Scenario: we have a coarse estimation of the  true sparsity pattern and $A=A_{\rm coarse}$.}\label{fig-mod2}
\end{figure}
\begin{figure}[!h]
  \centering
  \includegraphics[width=0.9\textwidth]{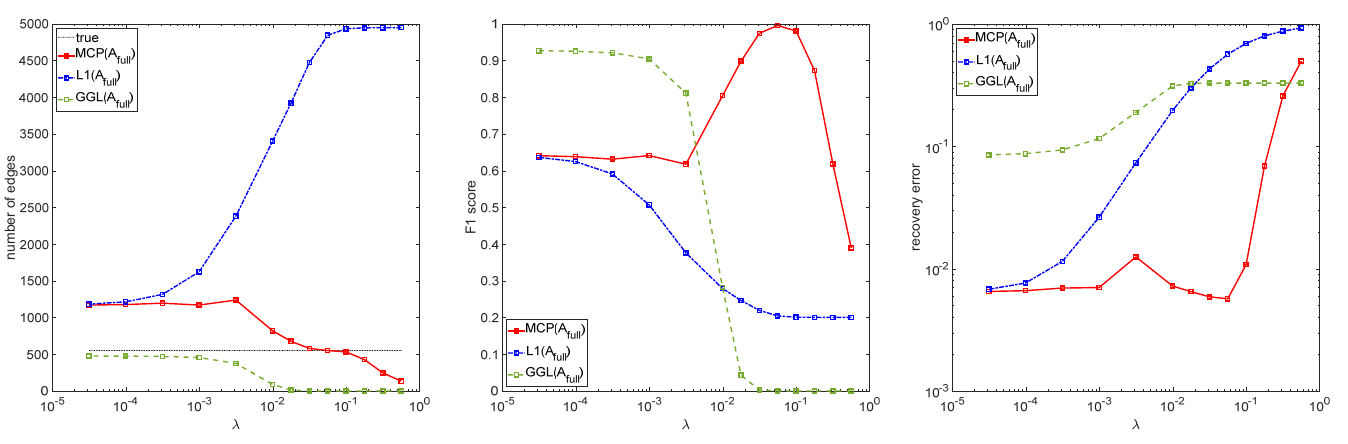}
  \caption{Results for random modular graph, $\mathcal{G}_M^{(100,0.05,0.3)}$.
  Scenario: the true sparsity pattern is unknown and  $A=A_{\rm full}$ is the input.}\label{fig-mod3}
\end{figure}
\begin{figure}[!h]
  \centering
  \includegraphics[width=0.9\textwidth]{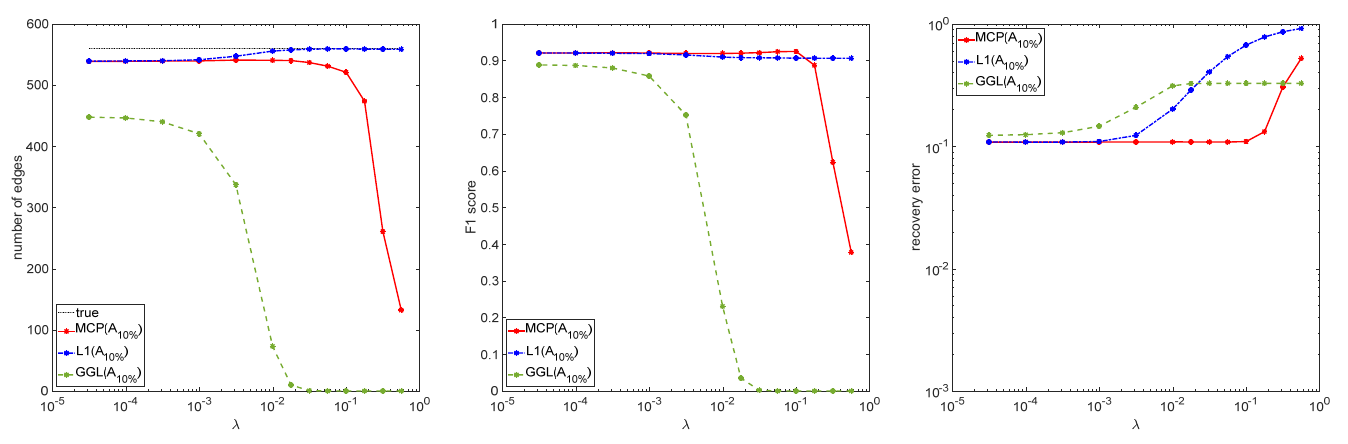}
  \caption{Results for random modular graph, $\mathcal{G}_M^{(100,0.05,0.3)}$. Scenario:
  we have a rough estimation of the true sparsity pattern and $A = A_{\rm 10\%}$  is not exactly accurate.}\label{fig-mod4}
\end{figure}

\end{document}